\documentclass{article}

% if you need to pass options to natbib, use, e.g.:
%     \PassOptionsToPackage{numbers, compress}{natbib}
% before loading neurips^2022

% ready for submission
% \usepackage{neurips_2022}

% to compile a preprint version, e.g., for submission to arXiv, add add the
% [preprint] option:
     \usepackage[nonatbib,preprint]{neurips_2022}

% to compile a camera-ready version, add the [final] option, e.g.:
%     \usepackage[final]{neurips_2022}

% to avoid loading the natbib package, add option nonatbib:
%    \usepackage[nonatbib]{neurips_2022}

\usepackage[utf8]{inputenc} % allow utf-8 input
\usepackage[T1]{fontenc}    % use 8-bit T1 fonts
\usepackage[hidelinks]{hyperref}       % hyperlinks
\usepackage{url}            % simple URL typesetting
\usepackage{booktabs}       % professional-quality tables
\usepackage{amsfonts}       % blackboard math symbols
\usepackage{nicefrac}       % compact symbols for 1/2, etc.
\usepackage{microtype}      % microtypography
\usepackage{xcolor}         % colors

\usepackage[graphicx]{realboxes}
\usepackage{amsmath}
\usepackage{algorithm}
\usepackage[noend]{algpseudocode}
\usepackage{graphicx}
\usepackage{amsthm}
\newtheorem{theorem}{Theorem}[]

\usepackage{chngcntr}
\usepackage{apptools}

\title{Federated Bayesian Neural Regression: A Scalable Global Federated Gaussian Process}

% The \author macro works with any number of authors. There are two commands
% used to separate the names and addresses of multiple authors: \And and \AND.
%
% Using \And between authors leaves it to LaTeX to determine where to break the
% lines. Using \AND forces a line break at that point. So, if LaTeX puts 3 of 4
% authors names on the first line, and the last on the second line, try using
% \AND instead of \And before the third author name.

\newcommand*\samethanks[1][\value{footnote}]{\footnotemark[#1]}

\author{%
  Haolin Yu\thanks{University of Waterloo, Waterloo, Canada; Vector Institute for AI, Toronto, Canada} \\
  \texttt{h89yu@uwaterloo.ca} \\
  \And
  Kaiyang Guo\thanks{Noah's Ark Lab, Huawei Technologies}\\
  \texttt{guokaiyang@huawei.com}\\
  \And
  Mahdi Karami\samethanks\\
  \texttt{mahdi.karami1@huawei.com}\\
  \And
  Xi Chen\samethanks\\
  \texttt{xi.chen4@huawei.com}\\
  \And
  Guojun Zhang\samethanks\\
  \texttt{guojun.zhang@huawei.com}\\
  \And
  Pascal Poupart\samethanks[1]\\
  \texttt{ppoupart@uwaterloo.ca} \\
  % examples of more authors
  % \And
  % Coauthor \\
  % Affiliation \\
  % Address \\
  % \texttt{email} \\
  % \AND
  % Coauthor \\
  % Affiliation \\
  % Address \\
  % \texttt{email} \\
  % \And
  % Coauthor \\
  % Affiliation \\
  % Address \\
  % \texttt{email} \\
  % \And
  % Coauthor \\
  % Affiliation \\
  % Address \\
  % \texttt{email} \\
}
\begin{document}

\maketitle

\begin{abstract}
In typical scenarios where the Federated Learning (FL) framework applies, it is common for clients to have insufficient training data to produce an accurate model. Thus, models that provide not only point estimations, but also some notion of confidence are beneficial. Gaussian Process (GP) is a powerful Bayesian model that comes with naturally well-calibrated variance estimations. However, it is challenging to learn a stand-alone global GP since merging local kernels leads to privacy leakage. To preserve privacy, previous works that consider federated GPs avoid learning a global model by focusing on the personalized setting or learning an ensemble of local models. We present Federated Bayesian Neural Regression (FedBNR), an algorithm that learns a scalable stand-alone global federated GP that respects clients' privacy. We incorporate deep kernel learning and random features for scalability by defining a unifying random kernel. We show this random kernel can recover any stationary kernel and many non-stationary kernels. We then derive a principled approach of learning a global predictive model as if all client data is centralized. We also learn global kernels with knowledge distillation methods for non-identically and independently distributed (non-i.i.d.)~clients. Experiments are conducted on real-world regression datasets and show statistically significant improvements compared to other federated GP models.

\end{abstract}

\section{Introduction}

In Federated Learning (FL) \cite{mcmahan2017communication}, we seek to train a model in a distributed way across several clients without any data leaving the clients to preserve privacy.  This is particularly attractive in application domains where each client has insufficient data to train a strong model by itself and therefore could benefit from additional information from other clients.  A trusted server is often used to aggregate the client models into a global model that improves upon the local models.  Since each client has limited data, its local model is uncertain and therefore there is value in representing this uncertainty to improve the aggregation at the server.  Intuitively, uncertain models should be given less importance in the aggregation.  Furthermore, uncertainty modeling can be used to derive confidence estimates with respect to predictions.

Gaussian Processes (GPs) with deep kernel learning~\cite{wilson2016deep, wilson2016stochastic} provide a good balance between expressiveness and complexity to represent model uncertainty.  At one end of the spectrum, most models such as traditional neural networks do not capture any uncertainty, but are simple and scalable.  At the other end of the spectrum, Bayesian neural networks express a full distribution over all weights of neural networks, but inference tends to be intractable. In between, a GP with a deep kernel consists of a neural feature extractor (also known as deep kernel) with a distribution over the last layer that facilitates exact inference.  Since the weights of the last layer are the most important for prediction, representing their uncertainty is often sufficient to capture most of the uncertainty of a model.

Several works have explored distributed GPs~\cite{deisenroth2015distributed,yin2020fedloc} and federated GPs~\cite{achituve2021personalized}.  While distributed GPs are designed to improve scalability, they pose an important privacy risk since sharing kernels either implies sharing data or sharing pairwise data similarity.  In contrast, pFedGP  \cite{achituve2021personalized} shares only the hyperparameters of deep kernels while learning local GPs that are never shared.  This personalized approach reduces privacy risks, but the local GPs do not benefit from other client information (beyond the shared kernel hyperparameters).  We propose a new federated GP technique called Federated Bayesian neural regression (FedBNR) that can learn a global GP with reduced privacy risks.  We avoid kernel sharing by working directly in the feature space and sharing scatter matrices (instead of kernels). We also propose a unifying random kernel (URK) that leverages random features and deep kernels to approximate any stationary kernel and some non-stationary kernels, including infinite kernels.  The approach is demonstrated on real world regression datasets where we achieve statistically significant improvements over prior techniques both in terms of predictions and expected calibration error.  The contributions of the paper can be summarized as follows:
\begin{itemize}
    \item New federated GP technique with deep kernel learning called federated Bayesian neural regression (FedBNR). To our knowledge, this is the first federated GP technique that learns a global GP. We describe an exact aggregation technique of the linear layer that allows inference in a way that is mathematically equivalent to inference with all the data centralized.  
    \item New unifying random kernel (URK) that provides a unifying definition for deep random kernels. URK can approximate any stationary kernel and many non-stationary kernels, including infinite kernels. This kernel has finitely many features and therefore allows us to work directly in the feature space (instead of the dual space).
\end{itemize}

\section{Related work}

\paragraph{Deep kernel learning and GP approximations.} There have been many works that committed to increase the model capacity of GPs by incorporating deep neural networks (DNNs). \cite{hinton2007using,calandra2016manifold} either pretrains a deep belief network or directly trains it with a GP to extract first-step features before sending the data into the GP with conventional kernels. \cite{wilson2016deep, wilson2016stochastic}, building on top of \cite{wilson2015kernel,wilson2015thoughts,titsias2009variational,hensman2013gaussian,nickson2015blitzkriging}, extends this idea with approximations for scalability and stochastic variational inference for classification tasks. Then \cite{tran2019calibrating} studies the variance estimations of deep kernel learning models and propose to use Monte-Carlo Dropout \cite{gal2016dropout} for better calibration, and \cite{ober2021promises} proposes to use Bayesian Neural Networks instead of deterministic DNNs to prevent over-fitting. Besides, \cite{garnelo2018conditional} designed a special architecture that makes it possible for DNNs to simulate GP behaviors. \cite{damianou2013deep} forms GPs into a deep architecture that corresponds to a deep belief network based on GP mappings. To make GPs practical, one popular method is the inducing point approximation, where the joint GP prior of training points and inducing points are approximated \cite{quinonero2005unifying}. Variants includes SoD \cite{quinonero2005unifying}, SoR \cite{smola2000sparse}, DTC \cite{seeger2003fast}, FITC \cite{FITC}, and PITC \cite{schwaighofer2002transductive}. Later, KISS-GP \cite{wilson2015kernel} gave another interpretation that inducing point approximations are equivalent to global GP interpolation, and it can exploit Kronecker structures \cite{saatcci2012scalable}. Another approximation method is random features \cite{RFF, sinha2016learning, oliva2016bayesian} that use randomized basis functions to approximate kernels. Details about random features will be covered in Section \ref{rf}, and more information about scalable GPs can be found in this survey \cite{liu2020gaussian}.

\paragraph{Distributed and Federated Gaussian Processes (GPs).} Closely related to our work is the literature on distributed and federated GPs.  Distributed GPs~\cite{deisenroth2015distributed,yin2020fedloc} were initially proposed to improve scalability by partitioning the data into several machines since GPs that operate in the dual space scale cubically with the amount of data in the worst case.  The product-of-experts framework has emerged as a popular technique to aggregate local GPs, including generalized product of experts~\cite{cao2014generalized} and robust Bayesian committee machines~\cite{deisenroth2015distributed}. Distributed optimization of hyperparameters in GPs has also been explored~\cite{xie2019distributed}.  While those techniques do not ensure data privacy, recent work about federated GPs reduce privacy risks while training in a distributed way.  This includes pFedGP~\cite{achituve2021personalized}, which optimizes the hyperparameters of a global deep kernel, while training local GPs.  In another line of work, GPs have also been used to estimate correlations between clients in FL in order to actively select independent clients for aggregation~\cite{tang2021fedgp}.  Our work differs from previous distributed and federated GPs by learning a global GP while reducing privacy risks.

\paragraph{Bayesian FL.} Beyond federated GPs, other Bayesian models have been explored to represent distributions over models and predictions in FL.  The challenge is in the aggregation of the local posteriors.  Various techniques have been proposed including posterior averaging~\cite{al2020federated}, online Laplace approximation~\cite{liu2021bayesian}, Thompson sampling~\cite{dai2020federated}, MCMC~\cite{pmlr-v151-vono22a}.

%\paragraph{Federated GP}

%\paragraph{Distributed GP}

%Related works are categorized into 3 subsections based on how they aggregate their local models.

%\subsubsection{FedAvg:}
%Personalized Federated Learning with
%Gaussian Processes\cite{achituve2021personalized}:

%\subsubsection{Product of Experts:}

%Distributed Gaussian %Processes\cite{deisenroth2015distributed}:

%Distributed Gaussian Processes Hyperparameter
%Optimization for Big Data Using Proximal ADMM
%\cite{xie2019distributed}:

%A Bayesian Federated Learning Framework with
%Online Laplace Approximation\cite{liu2021bayesian}:

%FedLoc: Federated Learning Framework for
%Data-Driven Cooperative Localization and
%Location Data Processing\cite{yin2020fedloc}:

%\subsubsection{Other:}

%Federated Bayesian Optimization via
%Thompson Sampling\cite{dai2020federated}:

%FedGP: Correlation-Based Active Client Selection
%Strategy for Heterogeneous Federated Learning\cite{tang2021fedgp}:

\begin{figure}
    \centering
    \includegraphics[width=0.68\textwidth]{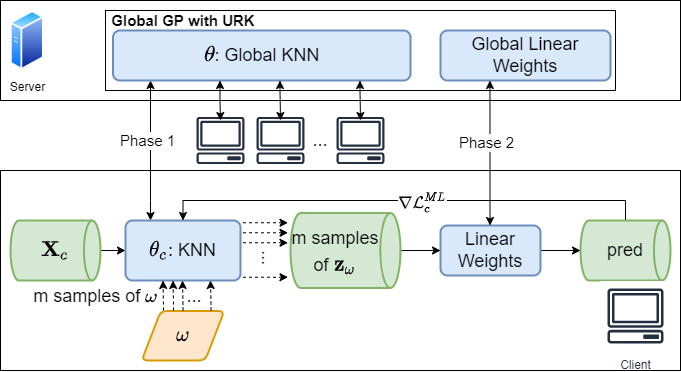}
    \caption{FedBNR learns a global federated GP in two phases: kernel learning with FL optimization and last layer updating with exact Bayesian inference. Though we work in the primal space, URK allows us to approximate an infinite kernel in the dual space with finite features in the primal space.}
    \label{fig:diag}
\end{figure}

\section{Background}

%We now give a brief review of Gaussian Processes and random features, an approximation to full GPs. 

\textbf{Notation.} We will use the following notations throughout the paper. ${\bf X}, {\bf X}' \in \mathbb{R}^{p\times n}$, ${\bf x}, {\bf x}' \in \mathbb{R}^p$ are input matrices or vectors. ${\bf y} \in \mathbb{R}^n$ is the target vector. A $*$ subscript indicates the vectors have not been seen by the models. $\sigma \in \mathbb{R}$ is the noise level of Bayesian models. ${\bf I}$ is the identity matrix and $\bf 0$ is the zero vector. Their dimensions can be inferred from the context. $\phi: \mathbb{R}^{p\times \bar{a}} \rightarrow \mathbb{R}^{p' \times \bar{a}}$ denotes some basis function, and ${\bf \Phi} = \phi({\bf X})$ is the corresponding features of $\bf X$.  $k: \mathbb{R}^{p \times \bar{a}} \times \mathbb{R}^{p \times \bar{a}} \rightarrow \mathbb{R}^{\bar{a} \times \bar{a}}$ denotes a kernel function, and ${\bf K} = k({\bf X}, {\bf X})$ is the corresponding kernel matrix. $\bar{a}$ is a placeholder that indicates the function can take in matrices or vectors of any dimension. ${\bf w} \in \mathbb{R}^{p'}$ is the weights of the last linear layer in a Bayesian linear regression model. $\lambda \in \mathbb{R}$ is the prior standard deviation. 
$\mathbb{E} [\cdot], \text{Cov}(\cdot, \cdot)$, and $\text{tr}(\cdot)$ are the expectation, covariance, and trace function respectively. Any symbol with a $c$ subscript is a local version of the the original symbol, held by some client $c$.

\subsection{Gaussian process}
A GP can be informally viewed as an infinite dimensional Gaussian distribution over functions $f(\cdot)$. With a finite set of points of interest ${\bf X}$ on the support, a GP boils down to a multi-dimensional Gaussian distribution $f({\bf X})$, providing mean and variance estimates at these places. Formally, the noisy version of a GP model is established as:
\begin{equation}
    y = f({\bf x}) + \epsilon \mbox{ ,where } \epsilon \sim N(0,\sigma^2)
\end{equation}
After a prior over $f(\cdot)$ is specified, likelihood, posterior, and prediction can be computed as follows:
\begin{align}
& \mbox{Prior: } &\Pr(f(\cdot)) = N({\bf 0},k(\cdot, \cdot)) \\
& \mbox{Likelihood:} & \Pr({\bf y}|{\bf X},f(\cdot)) = N(f({\bf X}),\sigma^2 {\bf I}) \\
& \mbox{Posterior:} & \Pr(f(\cdot)|{\bf X},{\bf y}) = N(\bar{m}(\cdot), k'(\cdot, \cdot)) \\
& & \mbox{where } \bar{m}(\cdot) = k(\cdot, {\bf X})({\bf K}+\sigma^2{\bf I})^{-1}{\bf y} \mbox{,} \\
& & k'(\cdot, \cdot) = k(\cdot, \cdot)- k(\cdot, {\bf X})({\bf K}+\sigma^2{\bf I})^{-1}k({\bf X}, \cdot)\\
& \mbox{Prediction:} & \Pr(y_*|{\bf x}_*,{\bf X},{\bf y}) = N(\bar{m}({\bf x}_*), k'({\bf x}_*, {\bf x}_*))
\end{align}
The complexity of GP is cubic in the amount of data due to the inversion of ${\bf K}$. Thus in practice, full GPs are often infeasible and approximations are needed for scalability. The performance of GP models highly depends on the kernel function $k(\cdot, \cdot)$, and hyperparameters of this function can be learnt by maximizing the log marginal likelihood $\log\Pr_{\text {GP}}({\bf y}|{\bf X}) = -{\bf y}^\top({\bf K} + \sigma^2{\bf I})^{-1}{\bf y}-\log|{\bf K} + \sigma^2{\bf I}|$.

A valid kernel function is any positive definite function and can always be decomposed into the outer product of some basis functions $k({\bf x}, {\bf x}') = \phi({\bf x})^\top\phi({\bf x}')$. An important advantage of working in the dual space is that one can have a kernel corresponding to infinite features without paying a price in terms of complexity. Popular kernels such as the Gaussian kernel tend to have infinite features, providing significant model capacity. However, inference in the dual space relies on evaluating the kernel distance between data points, making it inevitably violates the privacy when data come from different sources.

If the kernel has finite features, a GP degenerates into Bayesian linear regression in the primal space:
\begin{equation}
\label{blr:begin}
    f(\cdot)={\bf w}^\top \phi(\cdot) 
\end{equation}
Given a prior, likelihood, posterior, and prediction are computed as follows:
\begin{align}
& \mbox{Prior: } &\Pr({\bf w}) = N({\bf 0},\lambda^2{\bf I}) \\
& \mbox{Likelihood:} & \Pr({\bf y}|{\bf X},{\bf w}) = N({\bf w}^\top{\bf X},\sigma^2 {\bf I}) \\
& \mbox{Posterior:} & \Pr({\bf w}|{\bf X},{\bf y}) = N(\bar{\bf w},{\bf A}^{-1}) \\
& & \mbox{where } \bar{\bf w} = \sigma^{-2}{\bf A}^{-1}{\bf \Phi}{\bf y} \mbox{ and } {\bf A} = \sigma^{-2} {\bf \Phi}{\bf \Phi}^\top + \lambda^{-2}{\bf I} \\
& \mbox{Prediction:} & \Pr(y_*|{\bf x}_*,{\bf X},{\bf y}) = N(\sigma^{-2}\phi({\bf x}_*){\bf A}^{-1}{\bf \Phi}{\bf y},\sigma^2 + \phi({\bf x}_*)^\top {\bf A}^{-1} \phi({\bf x}_*)) 
\label{blr:end}
\end{align}
The complexity of Bayesian linear regression is linear in the amount of data, but cubic in the number of features due to the inversion of ${\bf A}$. Hyperparameters of $\phi$ can be learnt similarly by maximizing the log marginal likelihood $\log\Pr_{\text {BLR}}({\bf y}|{\bf X}) = -n\log\sigma^2-\log|\lambda^2{\bf I}| - \log{|{\bf A}|} - {\bf y}^\top{\bf y}/\sigma^2+{\bf w}^\top{\bf A}{\bf w}$.

\subsection{Random features}
\label{rf}

Random features \cite{RFF} is a kind of approximation that allows working in the primal space despite the full GP having infinitely many features. The idea is to find randomized basis functions ${\bf z}$ such that:
\begin{equation}
    k({\bf x}, {\bf x}') = \phi({\bf x})^\top\phi({\bf x}') = \mathbb{E}[{\bf z}({\bf x})^\top{\bf z}({\bf x}')] \approx \left(\frac{{\bf s}^m({\bf x})}{\sqrt{m}}\right)^\top\left(\frac{{\bf s}^m({\bf x}')}{\sqrt{m}}\right)
\end{equation}
When $\frac{{\bf s}^m({\bf x})}{\sqrt{m}}$, the normalized concatenation of $m$ samples of ${\bf z}({\bf x})$, has much lower dimensionality than $\phi({\bf x})$ and the amount of data, the cubic cost in the number of features becomes negligible.

The most renowned random feature approach is random Fourier features (RFF) \cite{RFF} that can approximate any stationary kernel, based on Bochner’s theorem:

\begin{theorem}[Bochner \cite{Bochner}]
\label{boch}
A continuous kernel \(k({\bf x}, {\bf x}') = k({\bf x}-{\bf x}') = k(\delta)\) on \(\mathbb{R}^p\) is positive definite if and only if \(k(\delta)\) is the Fourier transform of a non-negative measure.
\end{theorem}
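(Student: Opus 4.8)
The plan is to prove the two implications separately, since the ``if'' direction is elementary and the ``only if'' direction carries all the weight. For sufficiency, suppose $k(\delta) = \int_{\mathbb{R}^p} e^{i\omega^\top \delta}\, d\mu(\omega)$ for some non-negative measure $\mu$. To verify positive definiteness I would take arbitrary points $\mathbf{x}_1,\dots,\mathbf{x}_n$ and complex coefficients $c_1,\dots,c_n$, substitute the integral representation into $\sum_{j,l} c_j \bar c_l\, k(\mathbf{x}_j-\mathbf{x}_l)$, and interchange the finite sum with the integral to obtain $\int_{\mathbb{R}^p} \bigl|\sum_j c_j e^{i\omega^\top \mathbf{x}_j}\bigr|^2\, d\mu(\omega)$. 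The integrand is pointwise non-negative and $\mu$ is non-negative, so the whole expression is $\ge 0$; continuity of $k$ is immediate from dominated convergence. This direction needs no real obstacle beyond Fubini.

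For necessity I would construct the measure explicitly by regularized Fourier inversion. First I would upgrade the discrete positive-definiteness condition to its integral form: approximating by Riemann sums and using continuity of $k$ gives $\int\!\int k(s-t)\, g(s)\overline{g(t)}\, ds\, dt \ge 0$ for every $g \in L^1 \cap L^2$. Next, fixing a frequency $\omega$ and a real Gaussian window $\psi_\epsilon$, I would apply this to $g(s) = e^{-i\omega^\top s}\psi_\epsilon(s)$; after the change of variables $\delta = s-t$ the double integral collapses to $\int k(\delta)\, e^{-i\omega^\top \delta}\, \Psi_\epsilon(\delta)\, d\delta$, where $\Psi_\epsilon$ is the (Gaussian) autocorrelation of $\psi_\epsilon$. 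This quantity is therefore a non-negative function of $\omega$; call it $p_\epsilon(\omega)$, and let $d\mu_\epsilon = p_\epsilon\, d\omega$. A direct computation of $\int p_\epsilon\, d\omega$ (Fourier inversion of the Gaussian window) shows the total mass is bounded by $k(\mathbf{0})$ uniformly in $\epsilon$, so $\{\mu_\epsilon\}$ is a family of finite non-negative measures with bounded mass.

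Finally I would pass to the limit $\epsilon \to 0$. Because $\Psi_\epsilon(\delta) \to 1$ pointwise after normalization, one checks that $\int e^{i\omega^\top \delta}\, d\mu_\epsilon(\omega) = k(\delta)\, \Psi_\epsilon(\delta)/\Psi_\epsilon(\mathbf{0}) \to k(\delta)$ for every fixed $\delta$ by continuity of $k$. Extracting a weak-$*$ convergent subsequence $\mu_\epsilon \rightharpoonup \mu$ via Banach--Alaoglu (or Helly's selection theorem) and passing the identity to the limit then yields $\int e^{i\omega^\top \delta}\, d\mu(\omega) = k(\delta)$, exhibiting $k$ as the Fourier transform of the non-negative measure $\mu$. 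The main obstacle is that weak-$*$ limits of measures can leak mass to infinity, so the convergence $\int e^{i\omega^\top\delta}\, d\mu_\epsilon \to \int e^{i\omega^\top\delta}\, d\mu$ is not automatic for the bounded-but-non-vanishing test functions $e^{i\omega^\top\delta}$. I would close this gap with a tightness argument, using continuity of $k$ at the origin to bound $\int \bigl(1 - \cos(\omega^\top\delta)\bigr)\, d\mu_\epsilon$ and hence control the tail mass outside large balls, exactly as in the proof of the L\'evy continuity theorem.
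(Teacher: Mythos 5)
The paper does not actually prove this statement: it is Bochner's classical theorem, imported by citation (Theorem~1, \cite{Bochner}) solely to justify random Fourier features, so there is no in-paper argument to compare against. Judged on its own merits, your sketch is the standard classical proof and is sound. The sufficiency direction (Fubini plus collapsing the double sum into $\int |\sum_j c_j e^{i\omega^\top \mathbf{x}_j}|^2\,d\mu$) is exactly right, granting that the measure is finite so that dominated convergence applies. For necessity, your three ingredients --- the integral form of positive definiteness, the regularized inversion giving a non-negative $p_\epsilon(\omega)$ with uniformly bounded mass, and the passage to a limit measure --- are precisely the ingredients of the textbook argument, and you correctly flagged the one genuinely dangerous step: weak-$*$ limits of $\{\mu_\epsilon\}$ can lose mass against the non-vanishing test functions $e^{i\omega^\top\delta}$, which is repaired by the L\'evy-continuity tightness estimate exploiting continuity of $k$ at the origin. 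The only place your sketch glosses over a real technical point is the use of Fourier inversion: to compute $\int p_\epsilon\,d\omega$ and to identify $\int e^{i\omega^\top\delta}\,d\mu_\epsilon(\omega)$ with $k(\delta)\Psi_\epsilon(\delta)/\Psi_\epsilon(\mathbf{0})$, you need $p_\epsilon \in L^1$, which is not automatic from $p_\epsilon$ being continuous and non-negative; the standard fix is to damp with a second Gaussian $e^{-\eta\|\omega\|^2}$, evaluate the damped integral by Fubini, and let $\eta \to 0$ by monotone convergence. With that detail supplied, the proposal is a correct proof of the cited theorem.
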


If the kernel is real-valued and properly scaled, its inverse Fourier transform $p(\omega)$ is also real-valued and is a proper probability distribution. Then a valid mapping is ${\bf z}_{\omega}({\bf x}) = [\cos(\omega^\top{\bf x}), \sin(\omega^\top{\bf x})]^\top$, since
\begin{equation}
\label{RFF_eq}
    k(\delta) = \int_{\mathbb{R}^p}p(\omega)\cos(\omega^\top\delta)d\omega = \mathbb{E}_{\omega}[\cos(\omega^\top\delta)] = \mathbb{E}_{\omega}[{\bf z}_{\omega}({\bf x})^\top{\bf z}_{\omega}({\bf x}')]
\end{equation}
The true expectation is then approximated by the empirical mean of multiple samples from ${\bf z}_{\omega}({\bf x})$, which makes it possible to recover an infinite kernel with a finite set of features, and enables working directly in the primal space.

\section{FedBNR: a scalable global federated GP}

We now describe our approach. First we extend RFF to non-stationary kernels. Then we show how we can learn a stand-alone global GP in a principled way by updating the model in two phases.

\subsection{Unifying random kernel}

Although conventional stationary kernels have been specifically popular due to their distance awareness property, deep kernel learning \cite{wilson2016deep, wilson2016stochastic} pointed out that incorporating DNNs with stationary kernels further increases the model capacity and makes it more suitable for modern machine learning tasks. However, the common architecture that a DNN is plugged in before the kernel to extract first-step features usually results in non-stationary kernels and also constrained the architecture of the combined kernel. Thus, we wish to extend RFF to non-stationary kernels and provide a unifying definition for random kernels with DNNs, with which people can design any architecture freely.

Let $\omega \in \mathbb{R}^{d'}$ be any random variable or vector, and $g: \mathbb{R}^{d'}\times\mathbb{R}^{p\times\bar{a}}\xrightarrow{} \mathbb{R}^{d\times\bar{a}}$ be any function that extracts $d$ features out of each input with some random weights $\omega$. Then we construct the random basis functions ${\bf z}$ as ${\bf z}_{\omega}({\bf x}) = g(\omega, {\bf x})$. We define the true underlying kernel and its approximation, the unifying random kernel (URK) as:
\begin{align}
\label{urk}
    k({\bf x}, {\bf x}') &= \mathbb{E}_{\omega}[{\bf z}_{\omega}({\bf x})^\top{\bf z}_{\omega}({\bf x}')] = \text{tr}(\text{Cov}_{\omega}({\bf z}_{\omega}({\bf x}), {\bf z}_{\omega}({\bf x}'))) + \mathbb{E}_{\omega}[{\bf z}_{\omega}({\bf x})]^\top\mathbb{E}_{\omega}[{\bf z}_{\omega}({\bf x}')]\\
\label{urk_2}
    &\approx \text{URK}_{\omega, g}({\bf x}, {\bf x}') = \left(\frac{{\bf s}_\omega^m({\bf x})}{\sqrt{m-1}}\right)^\top\left(\frac{{\bf s}_\omega^m({\bf x}')}{\sqrt{m-1}}\right), 
\end{align}
where ${\bf s}_\omega^m({\bf x})$ is the concatenation of $m$ samples of $\bf{z}_\omega({\bf x})$. URK can also recover any stationary kernel since RFF is a special case of it:

\begin{theorem}
Given any properly scaled stationary kernel \(k({\bf x}, {\bf x}')\) on \(\mathbb{R}^p\) and its inverse Fourier transform \(p(\omega)\), \(\exists \omega \sim p(\omega), g(\omega, {\bf x}) = [\cos(\omega^\top{\bf x}), \sin(\omega^\top{\bf x})]^\top\) s.t. \(\lim_{m\xrightarrow{}\infty} \text{URK}_{\omega, g}({\bf x}, {\bf x}') = k({\bf x}, {\bf x}')\). 
\end{theorem}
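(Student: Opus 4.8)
The plan is to observe that the stated choice of $g$ makes $\text{URK}_{\omega,g}$ exactly the random-Fourier-features estimator, so the existence claim is settled purely by construction and only the convergence needs an argument. First I would fix $\omega \sim p(\omega)$ — which is a valid probability density by Bochner's theorem (Theorem \ref{boch}) together with the ``properly scaled'' hypothesis ensuring $\int p(\omega)\,d\omega = 1$ — and take $g(\omega,{\bf x}) = [\cos(\omega^\top{\bf x}),\sin(\omega^\top{\bf x})]^\top$ as stated. Writing $\omega_1,\dots,\omega_m$ for the $m$ i.i.d. draws used to form ${\bf s}_\omega^m$, the definition \eqref{urk_2} unrolls into an empirical average
\begin{equation}
\text{URK}_{\omega,g}({\bf x},{\bf x}') = \frac{1}{m-1}\sum_{i=1}^m {\bf z}_{\omega_i}({\bf x})^\top {\bf z}_{\omega_i}({\bf x}').
\end{equation}

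Next I would collapse each summand with the cosine angle-subtraction identity: since ${\bf z}_{\omega_i}({\bf x})^\top {\bf z}_{\omega_i}({\bf x}') = \cos(\omega_i^\top{\bf x})\cos(\omega_i^\top{\bf x}') + \sin(\omega_i^\top{\bf x})\sin(\omega_i^\top{\bf x}') = \cos(\omega_i^\top({\bf x}-{\bf x}'))$, setting $\delta = {\bf x}-{\bf x}'$ reduces the estimator to $\tfrac{1}{m-1}\sum_{i=1}^m \cos(\omega_i^\top\delta)$. This is precisely the empirical counterpart of the expectation in \eqref{RFF_eq}, whose i.i.d. summands $\cos(\omega_i^\top\delta)$ satisfy $|\cos(\cdot)|\le 1$ and are therefore bounded, hence integrable against $p(\omega)$; no moment conditions are needed.

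Finally I would separate the Bessel-style normalization from the average, writing $\tfrac{1}{m-1}\sum = \tfrac{m}{m-1}\cdot\tfrac{1}{m}\sum$. The factor $\tfrac{m}{m-1}\to 1$, while the strong law of large numbers yields $\tfrac{1}{m}\sum_{i=1}^m \cos(\omega_i^\top\delta) \to \mathbb{E}_\omega[\cos(\omega^\top\delta)]$ almost surely; by \eqref{RFF_eq} this expectation is exactly $k(\delta) = k({\bf x},{\bf x}')$, which closes the limit. The only genuinely nontrivial bookkeeping is the $\sqrt{m-1}$ (rather than $\sqrt{m}$) normalization inherited from the unbiased-covariance form of \eqref{urk}; I expect the main care to lie in confirming that this rescaling is asymptotically inert rather than in the probabilistic convergence, which is a direct appeal to Bochner's theorem plus the law of large numbers.
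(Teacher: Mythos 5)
Your proposal is correct and follows essentially the same route as the paper's proof: identify the stated $\omega$ and $g$ with the RFF construction via Bochner's theorem and Equation \eqref{RFF_eq}, then let the empirical average converge to the expectation $\mathbb{E}_\omega[{\bf z}_\omega({\bf x})^\top{\bf z}_\omega({\bf x}')] = k({\bf x},{\bf x}')$. If anything, you are more careful than the paper, which loosely calls $\text{URK}_{\omega,g}$ an ``unbiased estimator'' of that expectation even though the $\frac{1}{m-1}$ normalization makes it biased by a factor of $\frac{m}{m-1}$; your explicit observation that $\frac{m}{m-1}\to 1$, combined with the strong law of large numbers applied to the bounded summands $\cos(\omega_i^\top\delta)$, closes that small gap cleanly.
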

\begin{proof}
Following Theorem \ref{boch} and Equation \ref{RFF_eq}, the construction of \(\omega\) and \({\bf z}_\omega\) is equivalent to RFF, so \(k({\bf x}, {\bf x}') = \mathbb{E}_{\omega}[{\bf z}_{\omega}({\bf x})^\top{\bf z}_{\omega}({\bf x}')]\). By Equation \ref{urk_2}, \(\text{URK}_{\omega, g}({\bf x}, {\bf x}')\) is an unbiased estimator of \(\mathbb{E}_{\omega}[{\bf z}_{\omega}({\bf x})^\top{\bf z}_{\omega}({\bf x}')]\), so \(\lim_{m\xrightarrow{}\infty} \text{URK}_{\omega, g}({\bf x}, {\bf x}') = k({\bf x}, {\bf x}')\).
\end{proof}

However, note that in the definition of URK we do not rely on the inverse Fourier transformation or the Bochner's theorem to find a valid distribution for $\omega$. Instead, any $\omega$ and $g$ can give us a valid kernel:

\begin{theorem}
\label{URF:all}
Given any proper probability distribution \(p(\omega)\) on \(\mathbb{R}^{d'}\) and function \(g\) on \(\mathbb{R}^{d'}\times\mathbb{R}^{p\times\bar{a}}\xrightarrow{} \mathbb{R}^{d\times\bar{a}}\), the corresponding kernel matrix \(k({\bf X}, {\bf X}') = \lim_{m\xrightarrow{}\infty} \text{URK}_{\omega, g}({\bf X}, {\bf X}')\) is positive definite.
\end{theorem}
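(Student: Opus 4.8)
The plan is to prove the claim directly from the expectation representation of the kernel rather than reasoning about the sampled features. By the definition in Equation~\ref{urk} (equivalently, by the law of large numbers applied to the $m$ independent draws of $\omega$ that make up ${\bf s}_\omega^m$), the limiting kernel matrix is
\begin{equation}
    k({\bf X}, {\bf X}) = \lim_{m\to\infty}\text{URK}_{\omega, g}({\bf X}, {\bf X}) = \mathbb{E}_\omega\!\left[{\bf z}_\omega({\bf X})^\top {\bf z}_\omega({\bf X})\right],
\end{equation}
so its $(i,j)$ entry is $\mathbb{E}_\omega[{\bf z}_\omega({\bf x}_i)^\top {\bf z}_\omega({\bf x}_j)]$, where ${\bf x}_i$ is the $i$-th column of ${\bf X}$. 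It then suffices to show ${\bf c}^\top k({\bf X},{\bf X})\,{\bf c}\ge 0$ for every ${\bf c}\in\mathbb{R}^n$.

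First I would fix an arbitrary ${\bf c}=(c_1,\dots,c_n)^\top$ and expand the quadratic form, using linearity of expectation to pull the finite double sum inside:
\begin{equation}
    {\bf c}^\top k({\bf X},{\bf X})\,{\bf c} = \sum_{i,j} c_i c_j\,\mathbb{E}_\omega\!\left[{\bf z}_\omega({\bf x}_i)^\top {\bf z}_\omega({\bf x}_j)\right] = \mathbb{E}_\omega\!\left[\sum_{i,j} c_i c_j\,{\bf z}_\omega({\bf x}_i)^\top {\bf z}_\omega({\bf x}_j)\right].
\end{equation}
The key step is to recognize the integrand as a perfect square: for each fixed $\omega$ the double sum factors as $\bigl\|\sum_i c_i\,{\bf z}_\omega({\bf x}_i)\bigr\|_2^2 = \|{\bf z}_\omega({\bf X})\,{\bf c}\|_2^2 \ge 0$. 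Since the expectation of a pointwise non-negative quantity is non-negative, we get ${\bf c}^\top k({\bf X},{\bf X})\,{\bf c}\ge 0$, and as ${\bf c}$ was arbitrary the matrix is positive semidefinite.

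An equivalent route, which I would mention as a sanity check, bypasses the expectation entirely: for each finite $m$ the matrix $\text{URK}_{\omega, g}({\bf X},{\bf X}) = {\bf \Phi}^\top{\bf \Phi}$ with ${\bf \Phi} = {\bf s}_\omega^m({\bf X})/\sqrt{m-1}$ is a Gram matrix, hence PSD for every realization of the samples, and because the cone of PSD matrices is closed the limit is again PSD. I do not anticipate a genuine obstacle here, since this is the standard ``kernel $=$ inner product of feature maps'' fact, but two points deserve care. The first is the interpretation of ``positive definite'': the construction only guarantees positive \emph{semi}definiteness, and strict definiteness additionally requires that no non-trivial combination $\sum_i c_i\,{\bf z}_\omega({\bf x}_i)$ vanish almost surely (i.e.\ that the random feature maps separate the points), so I would either state the conclusion as PSD or add this non-degeneracy hypothesis. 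The second is the mild regularity needed for the limit and the expectation to exist, namely $\mathbb{E}_\omega[\|{\bf z}_\omega({\bf x})\|_2^2]<\infty$ for each input, which I would assume so that the law of large numbers applies and all quantities above are finite.
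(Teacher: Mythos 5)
Your proof is correct, but it takes a genuinely different route from the paper's. The paper invokes the decomposition already stated in Equation \ref{urk}: writing $k({\bf X}, {\bf X}') = \sum_{i=1}^{d}\text{Cov}_{\omega}({\bf z}_{\omega}({\bf X})_i, {\bf z}_{\omega}({\bf X}')_i) + \mathbb{E}_{\omega}[{\bf z}_{\omega}({\bf X})]^\top\mathbb{E}_{\omega}[{\bf z}_{\omega}({\bf X}')]$, it argues that the first term is a sum of $d$ covariance matrices and the second is a Gram matrix of mean features, so both summands are positive (semi)definite and hence so is their sum. You instead establish nonnegativity of the quadratic form directly, ${\bf c}^\top k({\bf X},{\bf X})\,{\bf c} = \mathbb{E}_\omega\!\left[\|{\bf z}_\omega({\bf X})\,{\bf c}\|_2^2\right] \ge 0$, and offer a second route via finite-$m$ Gram matrices and closedness of the PSD cone. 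Your argument is more elementary and self-contained: the paper's proof ultimately rests on the PSD-ness of covariance matrices, which is itself proved by exactly your expectation-of-a-square computation. You are also more careful on two points the paper glosses over: (i) the conclusion is positive \emph{semi}definiteness unless one adds a non-degeneracy hypothesis (the paper's claim that covariance matrices are ``always positive definite'' is false in the strict matrix sense --- a deterministic or point-non-separating ${\bf z}_\omega$ yields a singular matrix), so ``positive definite'' must be read in the kernel-literature sense of PSD; and (ii) the integrability condition $\mathbb{E}_\omega[\|{\bf z}_\omega({\bf x})\|_2^2]<\infty$ needed for the law of large numbers to apply and for the limiting kernel to be well defined. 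What the paper's decomposition buys in exchange is structural insight --- it exhibits the limiting kernel as a variability term plus a mean term, an intuition reused elsewhere in the paper --- though even there its remark that the mean term ``adds one feature'' is imprecise, since $\mathbb{E}_\omega[{\bf z}_{\omega}({\bf X})]$ is $d\times n$ and contributes up to $d$ features.
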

\begin{proof}
Following Equation \ref{urk}, we have $$\textstyle k({\bf X}, {\bf X}') = \sum_{i=1}^{d}(\text{Cov}_{\omega}({\bf z}_{\omega}({\bf X})_i, {\bf z}_{\omega}({\bf X}')_i)) + \mathbb{E}_{\omega}[{\bf z}_{\omega}({\bf X})]^\top\mathbb{E}_{\omega}[{\bf z}_{\omega}({\bf X}')].$$ ${\bf z}_{\omega}({\bf X})_i$ denotes the $i^{th}$ row of the $d\times n$ matrix ${\bf z}_{\omega}({\bf X})$. The first term is the addition of $d$ covariance matrices and is always positive definite; the second term is symmetrical and essentially adds one feature to the first term. Thus $k({\bf X}, {\bf X}')$ is always positive definite.
\end{proof}

Since $g$ is an arbitrary function, we can assign it any DNN with any architecture. We call it a Kernel Neural Network (KNN) since its weights, $\theta$, are essentially the kernel hyperparameters. Theorem \ref{URF:all} allows us to train kernels with optimization methods similarly to training DNNs from a much richer hypothesis set than conventional kernels. To provide some insights into possible non-stationary kernels expressed by URK, we give an example construction of a kernel with infinite features below.

Let $\omega \sim N({\bf 0}, {\bf I})$, $g(\omega, {\bf x}) = \exp({\omega^\top{\bf x}})$. By definition, $k({\bf x},{\bf x}') = \mathbb{E}_\omega[\exp(\omega^\top({\bf x}+{\bf x}'))] = M_\omega({\bf x}+{\bf x}') = \exp(({\bf x}+{\bf x}')^\top({\bf x}+{\bf x}')/2)$, where $M_\omega$ is the moment generating function of Gaussian distributions. This kernel contains features of all polynomial kernels with $c=0$, times $\exp({\bf x}^\top{\bf x}/2)$ and a constant. The proof can be found in Appendix A.

\subsection{Two-phase update}

The training procedure of FedBNR can be divided into 2 phases, as illustrated in Fig.~\ref{fig:diag}. In the first phase, we train the KNN by optimization methods. In the second phase, we calculate the weights of the last linear layer that maps the random features to the output space, by a closed-form formula inferred from Equation \ref{blr:begin} - \ref{blr:end}. The pseudocode of FedBNR is summarized in Algorithm~\ref{alg:alg}.

\begin{algorithm}
\caption{Federated Bayesian Neural Regression \\($\theta$: the global KNN, $\sigma$: noise level, $\lambda$: prior covariance of the linear layer, $\omega$: a set of random numbers, $\zeta$: local learning step, $\zeta'$: knowledge distillation step)}\label{alg:cap}
\label{alg:alg}
\begin{algorithmic}
\State \textbf{Phase 1: Kernel Learning}
\State \text{Initialize shared kernel} $\theta \gets \theta^0$\text{, hyperparameters} $\sigma \gets \sigma^0, \lambda \gets \lambda^0$
\For{each aggregation round $t \gets 0, 1, 2, \cdots$}
    \For{each client $c \in S$}
    \State $\theta_c^t, \sigma_c^t, \lambda_c^t \gets \theta^t, \sigma^t, \lambda^t$
        \For {each local update round $k \gets 0, 1, 2, \cdots$}
        \State $\theta_c^t, \sigma_c^t, \lambda_c^t \gets -\zeta\nabla \mathcal{L}^{ML}_c$ according to Equation \ref{LML}
        \EndFor
    \EndFor
    \If {FedAvg}
    \State $\theta^{t+1}, \sigma^{t+1}, \lambda^{t+1} \gets mean(\theta_S^{t}), mean(\lambda_S^{t}), mean(\lambda_S^{t})$
    \ElsIf {Knowledge Distillation}
        \For{each knowledge distillation round $k' \gets 0, 1, 2, \cdots$}
        \State $\theta^{t+1}, \sigma^{t+1}, \lambda^{t+1} \gets -\zeta'\nabla\mathcal{L}^{KD}$ according to Equation \ref{LKD}
        \EndFor
    \EndIf
\EndFor
\State \textbf{Phase 2: Update the Global Linear Layer}
\State \textbf{Server:} send $\theta, \sigma, \lambda, \omega$ to all clients
\For{each {\bf Client} $c \in S$}
\State compute the random features ${\bf \Phi}_c \gets \theta(\omega, {\bf X}_c)$ and send ${\bf \Phi}_c {\bf \Phi}_c^\top$ to the server
\EndFor
\State \textbf{Server:} send ${\bf A}^{-1} \gets (\sigma^{-2} \sum^c{({\bf \Phi}_c {\bf \Phi}_c^\top)} + \lambda^{-2}{\bf I})^{-1}$ to all clients
\For{each {\bf Client} $c \in S$}
\State $\bar{\bf w}_c \gets \sigma^{-2}{\bf A}^{-1}{\bf \Phi}_c{\bf y}_c$ and send $\bar{\bf w}_c$ to the server
\EndFor
\State \textbf{Server:} $\bar{\bf w} \gets \sum_{c \in S}{\bar{\bf w}_c}$
\end{algorithmic}
\end{algorithm}

In phase 1, we follow a standard training procedure under the FL framework. We assume there is a central server holding the shared global KNN weights $\theta$ and global hyperparameters $\sigma, \lambda$ that denote the noise level and the prior variance respectively. We assume there is a set of clients $c \in S$ holding local KNNs weights $\theta_c$, local hyperparameters $\sigma_c, \lambda_c$, local inputs ${\bf X}_c$, and local targets ${\bf y}_c$. We use $\theta({\bf x})$ to denote the result of sending $\bf x$ through the KNN. In the beginning of each aggregation round, the server first sends a copy of aggregated or initialized $\theta, \sigma, \lambda$ to all the clients. Then all the clients $c \in S$ first update their local model for a fixed number of iterations. Then they send $\theta_c$, $\sigma_c$, and $\lambda_c$ back to the server for aggregation and starts another aggregation round. The local loss function is the local log marginal likelihood:
\begin{equation}
\label{LML}
    \mathcal{L}^{ML}_c = \log \text{Pr}_{\text{BLR}}({\bf y}_c|\theta_c({\bf X}_c); \sigma_c, \lambda_c) = -n_c\log\sigma_c^2-\log(|\lambda_c^2{\bf I}||{\bf A}_c|) - {\bf y}_c^\top{\bf y}_c/\sigma_c^2+{\bf w}_c^\top{\bf A}_c{\bf w}_c
\end{equation}
One commonly used method for aggregation is the FedAvg \cite{FedAvg} heuristic, where the new global model parameters are assigned the average of all client model parameters $\theta = \sum_{c \in S} \theta_c / |S|$. However, multiple works \cite{karimireddy2020scaffold, shoham2019overcoming,mohri2019agnostic} have pointed out the quality and the convergence rate of this heuristic can suffer from non-i.i.d.~clients. To account for this, we propose to adapt kernel knowledge distillation \cite{he2021feature} to aggregate the KNNs. We assume the server holds a relatively small dataset ${\bf X}_{kd}, {\bf y}_{kd}$ and tries to minimize the following knowledge distillation loss with respect to this dataset:
\begin{equation}
\label{LKD}
    \mathcal{L}^{KD} = \mathcal{L}^{ML}_{kd} + \alpha*\text{MSE}\left(\theta({\bf X}_{kd})^\top\theta({\bf X}_{kd}) - \sum_{c \in S}\theta_c({\bf X}_{kd})^\top\theta_c({\bf X}_{kd})/|S|\right)
\end{equation}
Here, $\mathcal{L}^{ML}_{kd}$ is the global log marginal likelihood loss $\mathcal{L}^{ML}$ with respect to the knowledge distillation dataset ${\bf X}_{kd}, {\bf y}_{kd}$. $\alpha$ is a common hyperparameter in knowledge distillation methods to adjust the ratio between the log marginal likelihood loss and the mean squared error (MSE) loss. The MSE loss factor forces the global kernel $\theta({\bf X}_{kd})^\top\theta({\bf X}_{kd})$ to simulate the mean of all client kernels $\sum_{c \in S}\theta_c({\bf X}_{kd})^\top\theta_c({\bf X}_{kd})/|S|$, which is akin to concatenating all the features of the client kernels. Ideally, if the global kernel successfully learns to do so, it should not perform worse on any of the clients, while the FedAvg heuristic has no similar guarantees.

In phase 2, we fix the kernel hyperparameters and learn ${\bf A}^{-1}$, the matrix for covariance prediction, and $\bar{\bf w}$, the weights of the last linear layer, in an exact way as if all client data are centralized. To understand the procedure, first notice that we can decompose ${\bf A}$ and $\bar{\bf w}$ as follows:
\begin{equation}
\label{A}
    {\bf A} = \sigma^{-2}{\bf \Phi}{\bf \Phi}^\top + \lambda^{-2}{\bf I} = \sigma^{-2}\sum_{c \in S}{\bf \Phi}_c{\bf \Phi}_c^\top + \lambda^{-2}{\bf I}
\end{equation}
\begin{equation}
\label{w}
    \bar{\bf w} = \sigma^{-2}{\bf A}^{-1}{\bf \Phi}{\bf y} = \sum_{c \in S}\sigma^{-2}{\bf A}^{-1}{\bf \Phi}_c{\bf y}_c
\end{equation}
Here ${\bf \Phi}_c = \theta({\bf X}_c)$ denotes the random features of local inputs extracted by the global KNN. The server first broadcasts the global model to all the clients, and asks them to return the scatter matrices ${\bf \Phi}_c{\bf \Phi}_c^\top$, and then the server can calculate $\bf{A}$ following Equation \ref{A}. Next, the server broadcasts ${\bf A}^{-1}$ and asks clients for the intermediate weights $\sigma^{-2}{\bf A}^{-1}{\bf \Phi}_c{\bf y}_c$. Finally, the server can calculate $\bar{\bf w}$ according to Equation \ref{w} and broadcasts the whole model again to all the clients.

We claim that FedBNR protects privacy of clients at least as well as FedAvg and other federated learning algorithms that send client models to the server. In phase 1, the aggregation only requires client model parameters. In phase 2, we send information twice outside each client: the scatter matrices and the intermediate weights. Sending these matrices and vectors are safer than sending the features ${\bf \Phi}_c$ directly since they have limited sizes that are completely independent of the training data size $n_c$, meaning they must only contain limited information about the raw data. Specifically, the scatter matrices are of size $md \times md$, and the intermediate weights are of size $md$, where $m$ is the number of samples from ${\bf z}$ and $d$ is the output dimension of the KNN.

\section{Experiments}

\subsection{Synthetic experiment}

Although personalized federated learning (PFL) is usually viewed as an advanced version of the plain FL framework since it learns a fine-tuned local model for each client and can automatically handle preference distribution skew \cite{zhu2021federated} (i.e. $\Pr({\bf y}|{\bf X})$ varies for clients), it is noteworthy that if all the clients can agree on a single global model in the hypothesis set, PFL with no global model may not be the best choice due to the trade-off between generalization and personalization. For example, a hospital that only collected data for cancer may also want their model to help diagnosing COVID-19, but personalization would prevent the model from generalizing to other ranges. Moreover, when a new client comes in with few data points, the quality of prediction will suffer compared to other clients, even only querying its own range.

\begin{figure}
  \centering
  \includegraphics[width=0.45\textwidth]{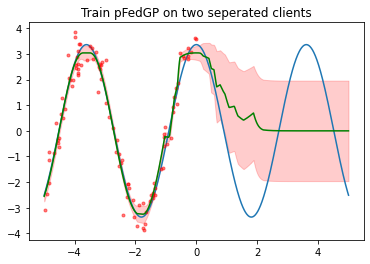}\includegraphics[width=0.45\textwidth]{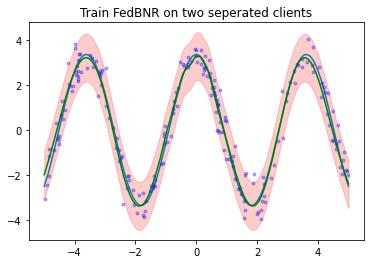}
  \includegraphics[width=0.45\textwidth]{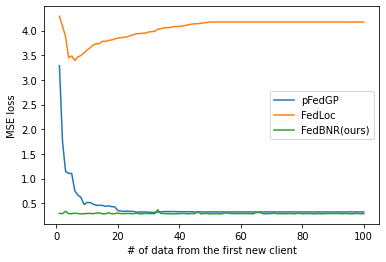}\includegraphics[width=0.45\textwidth]{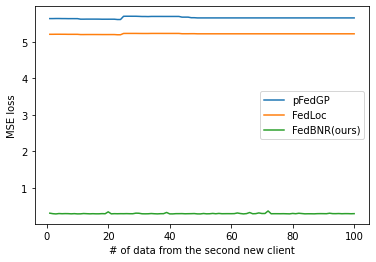}
  \caption{Top graphs: (i) left: prediction of a pFedGP model trained on two non-overlapping clients; (ii) right: global prediction of FedBNR. Blue curve: underlying truth; green curve: point estimation; red range: 95\% confidence interval; dots: training data leveraged by the model. Bottom graphs: MSE of predictions tested on the same range v.s. number of training data from a new client; (i) left: client with training range [-5, 5]; (ii) right: client with training range [5, 15]}
  \label{fig:syn}
\end{figure}

We designed the following synthetic experiments to support these arguments. We first decide a true underlying function, sample 200 points uniformly from the range $[-5, 5]$, and add Gaussian noise with $\sigma = 0.5$. We then learn the kernel hyperparameters in a centralized fashion to eliminate any impact from imperfect kernels later on. Details about the kernel sizes and architectures can be found in Appendix B. We then assign the first 100 points in range $[-5, 0]$ to client 1, and the rest to client 2. We train pFedGP and FedBNR with the learnt kernel hyperparameters fixed and query for prediction in range $[-5, 5]$. The top left graph of Figure \ref{fig:syn} shows the result of pFedGP, and the top right one shows the result of FedBNR. The blue curve shows the true underlying function, and the green curve shows the predictions. The light red area is a 95\% confidence interval based on variance. 

Since both algorithms share the same kernel hyperparameters, the only difference lies in whether they leverage data of both clients for the predictive model. As shown in the pFedGP graph, the personalized model of client 1 only sees its own data (red dots), so it does not generalize well to the range of client 2, while the global model learnt by FedBNR can leverage all the data (blue dots) and generalizes to the full range.

Next, we introduce two new clients into the system. The first client holds data uniformly sampled from range $[-5, 5]$, and the second from range $[5, 15]$. We again fix the kernel hyperparameters learnt centrally and train pFedGP, FedLoc, and FedBNR on these new clients seperated. For testing, we still query the range $[-5, 5]$. The bottom graphs of Figure \ref{fig:syn} shows the MSE loss as the size of data of both new clients grows. As expected, for pFedGP, the quality of prediction is massively impacted when there are few points in the first case, and for FedBNR the loss remains approximately a straight line. For FedLoc, the loss is also impacted due to zero-out effects of non-overlapping client models. Even worse, when training data and testing data are not in the same range for the second new client, the MSE loss of both pFedGP and FedLoc never drops back to the level of previous clients.

\subsection{UCI regression datasets}

We conducted comprehensive experiments on five UCI regression datasets under ten cases. Results of two variants are reported: i) \textit{FedBNR} that performs the FedAvg heuristic at aggregation; and ii) \textit{FedBNR-KD} that performs the knowledge distillation method at aggregation. 

Two groups of baselines are compared to our method for RMSE error: i) ablation study that contains local+local, local+global, avg+local, and kd+global, in the format of kernel learning method + last linear weight learning method, where we remove the global aggregation of either phase 1 or phase 2 from our methods; ii) previous works that contains (1) FedAvg \cite{FedAvg}, a standard non-Bayesian FL algorithm that has a global model; (2) FedProx \cite{FedProx}, a non-Bayesian FL algorithm that adds a proximal loss to FedAvg to prevent client models from getting too far from the global model; (3) pFedGP \cite{achituve2021personalized}, a Bayesian PFL method that learns a local GP with a shared deep kernel for each client; and (4) pFedGP \cite{yin2020fedloc}, a Bayesian FL method that directly applies distributed GP methods \cite{xie2019distributed} without deep kernel learning. We also compare to pFedGP and FedLoc for calibration errors since they are Bayesian models that have a notion of confidence. Besides, we report results of a centralized GP equipped with URK as a casual reference of the testing error lower-bound.

We used fully connected neural networks to extract first-step features for FedAvg, FedProx, and pFedGP. We used Gaussian kernels for the GPs in pFedGP and FedLoc. For fairness, KNNs used by our methods have similar architectures to the combined kernel of pFedGP. For scalability, FITC \cite{FITC} approximations are implemented for the other GPs as described in pFedGP. We used 50 random samples for KNN and 50 inducing points for pFedGP and FedLoc. Further details are in Appendix B. 

Each dataset is uniformly divided into 8:1:1 training:testing:validation sets globally. The training data is sorted by the feature that has the largest absolute correlation coefficient with the output, and divided into multiple chunks. Each client randomly takes two chunks so that their data distributions are heterogeneous. The larger the absolute correlation coefficient is, the more significant the distribution skew is. Before training, we tune hyperparameters that cannot be learnt by gradient descent with grid searching on the validation set. Specially, FedBNR-KD uses 80\% of the validation set for knowledge distillation, and the rest 20\% for validation. All the datasets are then ran for at most 50 local epochs times 100 aggregation rounds in full batches. Validation and testing error are recorded for each aggregation round. For methods that contain only local models, these errors are defined as the mean error of all local models with respect to the testing/validation set. If the validation error has not improved for 5 rounds, the training process is terminated. We report the average minimum testing RMSE for 10 random seeds for each case in Table \ref{Tab:UCI1}. We also measured the statistical significance of the results compared to FedBNR with one-tailed Wilcoxon signed-rank tests \cite{wilcoxon}. We then report expected calibration errors in Table \ref{Tab:UCI2} and perform the Wilcoxon test compared to FedBNR-KD. The maximum calibration error, the Brier score, and further details of Table \ref{Tab:UCI1} are included in Appendix C.

\begin{table}
{\small
  \caption{UCI regression datasets, RMSE reported. $\Uparrow$ and $\uparrow$ denote significantly worse results with $p < 0.01$ and $p < 0.05$ respectively; $\Downarrow$ and $\downarrow$ denote significantly better results similarly.}
  \centering
  \begin{tabular}{lllllllllll}
    \toprule
    & \multicolumn{2}{c}{Skillcraft \cite{thompson2013video}} & \multicolumn{2}{c}{SML \cite{zamora2014line}} & \multicolumn{2}{c}{Parkinsons \cite{tsanas2009accurate}} & \multicolumn{2}{c}{Bike \cite{ve2020rule}} & \multicolumn{2}{c}{CCPP \cite{tufekci2014prediction}}\\
    \cmidrule(r){2-3}
    \cmidrule(r){4-5}
    \cmidrule(r){6-7}
    \cmidrule(r){8-9}
    \cmidrule(r){10-11}
    train/test size & \multicolumn{2}{c}{2670 334} & \multicolumn{2}{c}{3309 414} & \multicolumn{2}{c}{4700 587} & \multicolumn{2}{c}{7008 876} & \multicolumn{2}{c}{7654 957}\\
    corr-coef & \multicolumn{2}{c}{-0.660} & \multicolumn{2}{c}{0.783} & \multicolumn{2}{c}{0.410} & \multicolumn{2}{c}{0.539} & \multicolumn{2}{c}{-0.948}\\
    \#clients     & 10 & 100 & 10 & 100 & 10 & 100 & 10 & 100 & 10 & 100 \\
    \midrule
    Central GP & \multicolumn{2}{c}{0.95} & \multicolumn{2}{c}{0.21} & \multicolumn{2}{c}{3.58} & \multicolumn{2}{c}{0.37} & \multicolumn{2}{c}{4.02}  \\
    \midrule
    local+local & 1.26$^\Uparrow$ & 1.48$^\Uparrow$ & 1.49$^\Uparrow$ & 2.32$^\Uparrow$ & 10.9$^\Uparrow$ & 10.6$^\Uparrow$ & 0.79$^\Uparrow$ & 0.92$^\Uparrow$ & 14.6$^\Uparrow$ & 19.3$^\Uparrow$\\
    local+global & 1.08$^\Uparrow$ & 1.22$^\Uparrow$ & 1.00$^\Uparrow$ & 1.65$^\Uparrow$ & 6.42$^\Uparrow$ & 7.42$^\Uparrow$ & 0.59$^\Uparrow$ & 0.73$^\Uparrow$ & 5.62$^\Uparrow$ & 7.03$^\Uparrow$\\
    avg+local & 1.05$^\Uparrow$ & 1.06$^\Uparrow$ & 0.61$^\Uparrow$ & 0.81$^\Uparrow$ & 9.84$^\Uparrow$ & 8.80$^\Uparrow$ & 0.45$^\Uparrow$ & 0.54$^\Uparrow$ & 8.32$^\Uparrow$ & 13.4$^\Uparrow$\\
    kd+local & 1.04$^\Uparrow$ & 1.28$^\Uparrow$ & 0.86$^\Uparrow$ & 1.34$^\Uparrow$ & 6.15$^\Uparrow$ & 6.97$^\Uparrow$ & 0.51$^\Uparrow$ & 0.61$^\Uparrow$ & 10.0$^\Uparrow$ & 17.1$^\Uparrow$\\
    \midrule
    FedAvg \cite{FedAvg} & 1.00$^\uparrow$ & 1.03$^\Uparrow$ & 0.36$^\Uparrow$ & 0.71$^\Uparrow$ & 6.83$^\Uparrow$& 7.38$^\Uparrow$ & \bf{0.38}$^\downarrow$ & 0.42 & 4.45 & 4.44\\
    FedProx \cite{FedProx} & 0.98 & 1.05$^\Uparrow$ & 0.34$^\uparrow$ & 0.62$^\Uparrow$ & 6.32$^\Uparrow$& 7.38$^\Uparrow$ & 0.39 & 0.42 & 4.43 & 4.47\\
    pFedGP \cite{achituve2021personalized} & 0.99$^\uparrow$ & 1.15$^\Uparrow$ & 0.75$^\Uparrow$ & 1.34$^\Uparrow$ & 9.36$^\Uparrow$ & 8.91$^\Uparrow$ & 0.45$^\Uparrow$ & 0.46$^\Uparrow$ & 14.2$^\Uparrow$ & 18.2$^\Uparrow$\\
    FedLoc \cite{yin2020fedloc} & 1.08$^\Uparrow$ & 4.15$^\Uparrow$ & 2.83$^\Uparrow$ & 5.43$^\Uparrow$ & 8.40$^\Uparrow$ & 11.5$^\Uparrow$ & 0.64$^\Uparrow$ & 0.75$^\Uparrow$ & 20.6$^\Uparrow$ & 44.1$^\Uparrow$\\
    \midrule
    \textbf{ours}\\
    FedBNR & 0.98 & \bf{0.97} & \bf{0.25} & \bf{0.44} & \bf{3.10} & 5.42 & 0.39 & \bf{0.42} & 4.40 & 4.51\\
    FedBNR-KD & \bf{0.96}$^\Downarrow$ & 0.98$^\uparrow$ & 0.55$^\Uparrow$ & 0.55$^\Uparrow$ & 4.58$^\Uparrow$ & \bf{4.67}$^\downarrow$ & 0.43$^\Uparrow$ & 0.48$^\Uparrow$ & \bf{4.38} & \bf{4.38}$^\downarrow$\\
    
    \bottomrule
  \end{tabular}
\label{Tab:UCI1}
}
\end{table}

The results show: i) in terms of RMSE, our methods are statistically better than the Bayesian models pFedGP and FedLoc in all the cases and better than the non-Bayesian models FedAvg and FedProx in most of the cases, especially when the training set at each client is significant smaller than the whole set; ii) compared with the ablation study methods, our methods always perform better, so the global aggregation at both phases are essential; iii) FedLoc without deep kernel learning has a especially smaller model capacity; iv) FedBNR-KD only outperforms FedBNR in 40\% of cases in terms of RMSE, which is probably due to the small size of data (8\% of all) used for knowledge distillation. However, FedBNR-KD is clearly more stable when client heterogeneity gets worse as the number of clients increases. It is also better calibrated than FedBNR and other Bayesian models in most cases.

\begin{table}
{\small
  \caption{Expected calibration error (ECE) reported. $\Uparrow$ and $\uparrow$ denote significantly worse results with $p < 0.01$ and $p < 0.05$ respectively; $\Downarrow$ and $\downarrow$ denote significantly better results similarly.}
  \centering
  \begin{tabular}{lllllllllll}
    \toprule
    & \multicolumn{2}{c}{Skillcraft} & \multicolumn{2}{c}{SML} & \multicolumn{2}{c}{Parkinsons} & \multicolumn{2}{c}{Bike} & \multicolumn{2}{c}{CCPP}\\
    \cmidrule(r){2-3}
    \cmidrule(r){4-5}
    \cmidrule(r){6-7}
    \cmidrule(r){8-9}
    \cmidrule(r){10-11}
    \#clients     & 10 & 100 & 10 & 100 & 10 & 100 & 10 & 100 & 10 & 100 \\
    \midrule
    Central GP & \multicolumn{2}{c}{0.02} & \multicolumn{2}{c}{0.09} & \multicolumn{2}{c}{0.27} & \multicolumn{2}{c}{0.11} & \multicolumn{2}{c}{0.24}  \\
    \midrule
    
    pFedGP \cite{achituve2021personalized} & 0.43$^\Uparrow$ & 0.38$^\Uparrow$ & 0.45$^\Uparrow$ & 0.45$^\Uparrow$ & 0.49$^\Uparrow$ & 0.48$^\Uparrow$ & 0.42$^\Uparrow$ & 0.41$^\Uparrow$ & 0.49$^\Uparrow$ & 0.49$^\Uparrow$\\
    FedLoc \cite{yin2020fedloc} & 0.32$^\Uparrow$ & 0.44$^\Uparrow$ & \bf{0.12}$^\Downarrow$ & 0.27$^\Uparrow$ & 0.32$^\Uparrow$ & 0.43$^\Uparrow$ & 0.26$^\Uparrow$ & 0.16$^\Uparrow$ & \bf{0.23}$^\Downarrow$ & 0.50$^\Uparrow$\\
    \midrule
    \textbf{ours}\\
    FedBNR & \bf{0.05} & 0.20$^\Uparrow$ & 0.39$^\Uparrow$ & 0.37$^\Uparrow$ & 0.36$^\uparrow$ & 0.40$^\Uparrow$ & \bf{0.07} & 0.04$^\Uparrow$ & 0.24$^\Downarrow$ & \bf{0.20}$^\Downarrow$\\
    FedBNR-KD & {\bf 0.05} & {\bf 0.06} & 0.20 & \bf{0.21} & \bf{0.29} & \bf{0.30} & 0.08 & \bf{0.09} & 0.30 & 0.31\\
    
    \bottomrule
  \end{tabular}
\label{Tab:UCI2}
}
\end{table}

\section{Conclusion}

In this work, we proposed FedBNR, a novel Bayesian federated learning algorithm that learns a global federated GP without privacy leakage and introduced URK, a unifying definition for deep random features, to approximate kernels with randomized basis functions in the primal space. FedBNR learns a kernel represented by a DNN under the URK definition, and share scatter matrices instead of direct features to achieve the exact global optimum of the last layer. We derived two variants based on the FedAvg heuristic and the knowledge distillation. Both variants shows empirically statistically significant improvements in terms of point estimation and calibration than other federated GP models. 

%\newpage

\begin{ack}

%Funding in direct support of this work by Noah's Ark Lab, Huawei Technologies.

Resources used in preparing this research at the University of Waterloo were provided by Huawei Canada, the province of Ontario and the government of Canada through CIFAR and companies sponsoring the Vector Institute.

\end{ack}

%\section*{References}

%\medskip

{
\small

\bibliography{main.bib}

\begin{thebibliography}{10}

\bibitem{mcmahan2017communication}
Brendan McMahan, Eider Moore, Daniel Ramage, Seth Hampson, and Blaise~Aguera
  y~Arcas.
\newblock Communication-efficient learning of deep networks from decentralized
  data.
\newblock In {\em Artificial intelligence and statistics}, pages 1273--1282.
  PMLR, 2017.

\bibitem{wilson2016deep}
Andrew~Gordon Wilson, Zhiting Hu, Ruslan Salakhutdinov, and Eric~P Xing.
\newblock Deep kernel learning.
\newblock In {\em Artificial intelligence and statistics}, pages 370--378.
  PMLR, 2016.

\bibitem{wilson2016stochastic}
Andrew~G Wilson, Zhiting Hu, Russ~R Salakhutdinov, and Eric~P Xing.
\newblock Stochastic variational deep kernel learning.
\newblock {\em Advances in Neural Information Processing Systems},
  29:2586--2594, 2016.

\bibitem{deisenroth2015distributed}
Marc Deisenroth and Jun~Wei Ng.
\newblock Distributed gaussian processes.
\newblock In {\em International Conference on Machine Learning}, pages
  1481--1490. PMLR, 2015.

\bibitem{yin2020fedloc}
Feng Yin, Zhidi Lin, Qinglei Kong, Yue Xu, Deshi Li, Sergios Theodoridis, and
  Shuguang~Robert Cui.
\newblock Fedloc: Federated learning framework for data-driven cooperative
  localization and location data processing.
\newblock {\em IEEE Open Journal of Signal Processing}, 1:187--215, 2020.

\bibitem{achituve2021personalized}
Idan Achituve, Aviv Shamsian, Aviv Navon, Gal Chechik, and Ethan Fetaya.
\newblock Personalized federated learning with gaussian processes.
\newblock {\em Advances in Neural Information Processing Systems}, 34, 2021.

\bibitem{hinton2007using}
Geoffrey~E Hinton and Russ~R Salakhutdinov.
\newblock Using deep belief nets to learn covariance kernels for gaussian
  processes.
\newblock {\em Advances in neural information processing systems}, 20, 2007.

\bibitem{calandra2016manifold}
Roberto Calandra, Jan Peters, Carl~Edward Rasmussen, and Marc~Peter Deisenroth.
\newblock Manifold gaussian processes for regression.
\newblock In {\em 2016 International Joint Conference on Neural Networks
  (IJCNN)}, pages 3338--3345. IEEE, 2016.

\bibitem{wilson2015kernel}
Andrew Wilson and Hannes Nickisch.
\newblock Kernel interpolation for scalable structured gaussian processes
  (kiss-gp).
\newblock In {\em International conference on machine learning}, pages
  1775--1784. PMLR, 2015.

\bibitem{wilson2015thoughts}
Andrew~Gordon Wilson, Christoph Dann, and Hannes Nickisch.
\newblock Thoughts on massively scalable gaussian processes.
\newblock {\em arXiv preprint arXiv:1511.01870}, 2015.

\bibitem{titsias2009variational}
Michalis Titsias.
\newblock Variational learning of inducing variables in sparse gaussian
  processes.
\newblock In {\em Artificial intelligence and statistics}, pages 567--574.
  PMLR, 2009.

\bibitem{hensman2013gaussian}
James Hensman, Nicolo Fusi, and Neil~D Lawrence.
\newblock Gaussian processes for big data.
\newblock {\em arXiv preprint arXiv:1309.6835}, 2013.

\bibitem{nickson2015blitzkriging}
Thomas Nickson, Tom Gunter, Chris Lloyd, Michael~A Osborne, and Stephen
  Roberts.
\newblock Blitzkriging: Kronecker-structured stochastic gaussian processes.
\newblock {\em arXiv preprint arXiv:1510.07965}, 2015.

\bibitem{tran2019calibrating}
Gia-Lac Tran, Edwin~V Bonilla, John Cunningham, Pietro Michiardi, and Maurizio
  Filippone.
\newblock Calibrating deep convolutional gaussian processes.
\newblock In {\em The 22nd International Conference on Artificial Intelligence
  and Statistics}, pages 1554--1563. PMLR, 2019.

\bibitem{gal2016dropout}
Yarin Gal and Zoubin Ghahramani.
\newblock Dropout as a bayesian approximation: Representing model uncertainty
  in deep learning.
\newblock In {\em international conference on machine learning}, pages
  1050--1059. PMLR, 2016.

\bibitem{ober2021promises}
Sebastian~W Ober, Carl~E Rasmussen, and Mark van~der Wilk.
\newblock The promises and pitfalls of deep kernel learning.
\newblock In {\em Uncertainty in Artificial Intelligence}, pages 1206--1216.
  PMLR, 2021.

\bibitem{garnelo2018conditional}
Marta Garnelo, Dan Rosenbaum, Christopher Maddison, Tiago Ramalho, David
  Saxton, Murray Shanahan, Yee~Whye Teh, Danilo Rezende, and SM~Ali Eslami.
\newblock Conditional neural processes.
\newblock In {\em International Conference on Machine Learning}, pages
  1704--1713. PMLR, 2018.

\bibitem{damianou2013deep}
Andreas Damianou and Neil~D Lawrence.
\newblock Deep gaussian processes.
\newblock In {\em Artificial intelligence and statistics}, pages 207--215.
  PMLR, 2013.

\bibitem{quinonero2005unifying}
Joaquin Quinonero-Candela and Carl~Edward Rasmussen.
\newblock A unifying view of sparse approximate gaussian process regression.
\newblock {\em The Journal of Machine Learning Research}, 6:1939--1959, 2005.

\bibitem{smola2000sparse}
Alex Smola and Peter Bartlett.
\newblock Sparse greedy gaussian process regression.
\newblock {\em Advances in neural information processing systems}, 13, 2000.

\bibitem{seeger2003fast}
Matthias~W Seeger, Christopher~KI Williams, and Neil~D Lawrence.
\newblock Fast forward selection to speed up sparse gaussian process
  regression.
\newblock In {\em International Workshop on Artificial Intelligence and
  Statistics}, pages 254--261. PMLR, 2003.

\bibitem{FITC}
Edward Snelson and Zoubin Ghahramani.
\newblock Sparse gaussian processes using pseudo-inputs.
\newblock {\em Advances in neural information processing systems}, 18, 2005.

\bibitem{schwaighofer2002transductive}
Anton Schwaighofer and Volker Tresp.
\newblock Transductive and inductive methods for approximate gaussian process
  regression.
\newblock {\em Advances in neural information processing systems}, 15, 2002.

\bibitem{saatcci2012scalable}
Yunus Saat{\c{c}}i.
\newblock {\em Scalable inference for structured Gaussian process models}.
\newblock PhD thesis, Citeseer, 2012.

\bibitem{RFF}
Ali Rahimi and Benjamin Recht.
\newblock Random features for large-scale kernel machines.
\newblock {\em Advances in neural information processing systems}, 20, 2007.

\bibitem{sinha2016learning}
Aman Sinha and John~C Duchi.
\newblock Learning kernels with random features.
\newblock {\em Advances in Neural Information Processing Systems}, 29, 2016.

\bibitem{oliva2016bayesian}
Junier~B Oliva, Avinava Dubey, Andrew~G Wilson, Barnab{\'a}s P{\'o}czos, Jeff
  Schneider, and Eric~P Xing.
\newblock Bayesian nonparametric kernel-learning.
\newblock In {\em Artificial intelligence and statistics}, pages 1078--1086.
  PMLR, 2016.

\bibitem{liu2020gaussian}
Haitao Liu, Yew-Soon Ong, Xiaobo Shen, and Jianfei Cai.
\newblock When gaussian process meets big data: A review of scalable gps.
\newblock {\em IEEE transactions on neural networks and learning systems},
  31(11):4405--4423, 2020.

\bibitem{cao2014generalized}
Yanshuai Cao and David~J Fleet.
\newblock Generalized product of experts for automatic and principled fusion of
  gaussian process predictions.
\newblock {\em arXiv preprint arXiv:1410.7827}, 2014.

\bibitem{xie2019distributed}
Ang Xie, Feng Yin, Yue Xu, Bo~Ai, Tianshi Chen, and Shuguang Cui.
\newblock Distributed gaussian processes hyperparameter optimization for big
  data using proximal admm.
\newblock {\em IEEE Signal Processing Letters}, 26(8):1197--1201, 2019.

\bibitem{tang2021fedgp}
Minxue Tang, Xuefei Ning, Yitu Wang, Yu~Wang, and Yiran Chen.
\newblock Fedgp: Correlation-based active client selection strategy for
  heterogeneous federated learning.
\newblock {\em arXiv preprint arXiv:2103.13822}, 2021.

\bibitem{al2020federated}
Maruan Al-Shedivat, Jennifer Gillenwater, Eric Xing, and Afshin Rostamizadeh.
\newblock Federated learning via posterior averaging: A new perspective and
  practical algorithms.
\newblock In {\em International Conference on Learning Representations}, 2020.

\bibitem{liu2021bayesian}
Liangxi Liu, Feng Zheng, Hong Chen, Guo-Jun Qi, Heng Huang, and Ling Shao.
\newblock A bayesian federated learning framework with online laplace
  approximation.
\newblock {\em arXiv preprint arXiv:2102.01936}, 2021.

\bibitem{dai2020federated}
Zhongxiang Dai, Bryan Kian~Hsiang Low, and Patrick Jaillet.
\newblock Federated bayesian optimization via thompson sampling.
\newblock {\em Advances in Neural Information Processing Systems},
  33:9687--9699, 2020.

\bibitem{pmlr-v151-vono22a}
Maxime Vono, Vincent Plassier, Alain Durmus, Aymeric Dieuleveut, and Eric
  Moulines.
\newblock Qlsd: Quantised langevin stochastic dynamics for bayesian federated
  learning.
\newblock In Gustau Camps-Valls, Francisco J.~R. Ruiz, and Isabel Valera,
  editors, {\em Proceedings of The 25th International Conference on Artificial
  Intelligence and Statistics}, volume 151 of {\em Proceedings of Machine
  Learning Research}, pages 6459--6500. PMLR, 28--30 Mar 2022.

\bibitem{Bochner}
Walter Rudin.
\newblock {\em Fourier analysis on groups}.
\newblock Courier Dover Publications, 2017.

\bibitem{FedAvg}
Brendan McMahan, Eider Moore, Daniel Ramage, Seth Hampson, and Blaise~Aguera
  y~Arcas.
\newblock Communication-efficient learning of deep networks from decentralized
  data.
\newblock In {\em Artificial intelligence and statistics}, pages 1273--1282.
  PMLR, 2017.

\bibitem{karimireddy2020scaffold}
Sai~Praneeth Karimireddy, Satyen Kale, Mehryar Mohri, Sashank Reddi, Sebastian
  Stich, and Ananda~Theertha Suresh.
\newblock Scaffold: Stochastic controlled averaging for federated learning.
\newblock In {\em International Conference on Machine Learning}, pages
  5132--5143. PMLR, 2020.

\bibitem{shoham2019overcoming}
Neta Shoham, Tomer Avidor, Aviv Keren, Nadav Israel, Daniel Benditkis, Liron
  Mor-Yosef, and Itai Zeitak.
\newblock Overcoming forgetting in federated learning on non-iid data.
\newblock {\em arXiv preprint arXiv:1910.07796}, 2019.

\bibitem{mohri2019agnostic}
Mehryar Mohri, Gary Sivek, and Ananda~Theertha Suresh.
\newblock Agnostic federated learning.
\newblock In {\em International Conference on Machine Learning}, pages
  4615--4625. PMLR, 2019.

\bibitem{he2021feature}
Bobby He and Mete Ozay.
\newblock Feature kernel distillation.
\newblock In {\em International Conference on Learning Representations}, 2021.

\bibitem{zhu2021federated}
Hangyu Zhu, Jinjin Xu, Shiqing Liu, and Yaochu Jin.
\newblock Federated learning on non-iid data: A survey.
\newblock {\em Neurocomputing}, 465:371--390, 2021.

\bibitem{FedProx}
Tian Li, Anit~Kumar Sahu, Manzil Zaheer, Maziar Sanjabi, Ameet Talwalkar, and
  Virginia Smith.
\newblock Federated optimization in heterogeneous networks.
\newblock {\em Proceedings of Machine Learning and Systems}, 2:429--450, 2020.

\bibitem{wilcoxon}
Frank Wilcoxon.
\newblock Individual comparisons by ranking methods.
\newblock In {\em Breakthroughs in statistics}, pages 196--202. Springer, 1992.

\bibitem{thompson2013video}
Joseph~J Thompson, Mark~R Blair, Lihan Chen, and Andrew~J Henrey.
\newblock Video game telemetry as a critical tool in the study of complex skill
  learning.
\newblock {\em PloS one}, 8(9):e75129, 2013.

\bibitem{zamora2014line}
Fransisco Zamora-Martinez, Pablo Romeu, Pablo Botella-Rocamora, and Juan Pardo.
\newblock On-line learning of indoor temperature forecasting models towards
  energy efficiency.
\newblock {\em Energy and Buildings}, 83:162--172, 2014.

\bibitem{tsanas2009accurate}
Athanasios Tsanas, Max Little, Patrick McSharry, and Lorraine Ramig.
\newblock Accurate telemonitoring of parkinson’s disease progression by
  non-invasive speech tests.
\newblock {\em Nature Precedings}, pages 1--1, 2009.

\bibitem{ve2020rule}
Sathishkumar VE and Yongyun Cho.
\newblock A rule-based model for seoul bike sharing demand prediction using
  weather data.
\newblock {\em European Journal of Remote Sensing}, 53(sup1):166--183, 2020.

\bibitem{tufekci2014prediction}
P{\i}nar T{\"u}fekci.
\newblock Prediction of full load electrical power output of a base load
  operated combined cycle power plant using machine learning methods.
\newblock {\em International Journal of Electrical Power \& Energy Systems},
  60:126--140, 2014.

\end{thebibliography}
\bibliographystyle{unsrt}
}

\newpage

\AtAppendix{\counterwithin{theorem}{section}}
\appendix

\renewcommand{\thefigure}{A.\arabic{figure}}
\setcounter{figure}{0}

\begin{figure}
  \centering
  \includegraphics[width=0.93\textwidth]{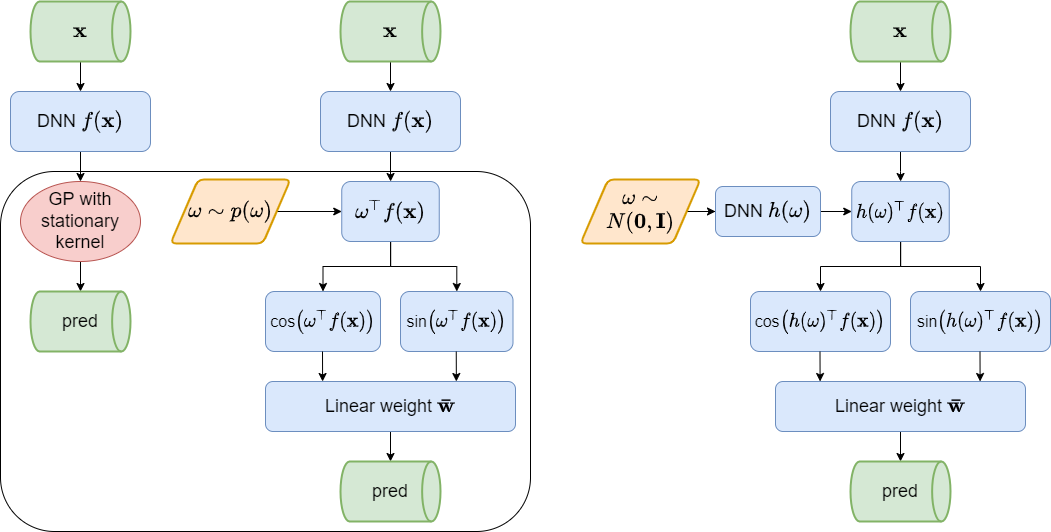}
  \caption{From left to right: a standard deep kernel learning algorithm with conventional stationary kernel GPs; the corresponding URK architecture; a more general architecture enabled by URK for convenient latent stationary kernel learning.}
  \label{fig:app}
\end{figure}

\section{Unifying random kernel}

\subsection{Example construction of a non-stationary kernel}

\begin{theorem}
Let \(\omega \sim N({\bf 0}, {\bf I}), g(\omega, {\bf x}) = \exp({\omega^\top{\bf x}})\), then \(k({\bf x},{\bf x}') = \lim_{m\xrightarrow{}\infty} \text{URK}_{\omega, g}({\bf x}, {\bf x}')\) has infinite features.
\end{theorem}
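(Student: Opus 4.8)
The plan is to first put the kernel in closed form, then write down an explicit infinite feature map, and finally certify that this feature space genuinely has infinite dimension rather than merely admitting an infinite-looking series.

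First I would evaluate the limit. Since ${\bf z}_\omega({\bf x}) = \exp(\omega^\top{\bf x})$ is scalar-valued, $\text{URK}_{\omega,g}({\bf x},{\bf x}')$ is the empirical mean of $m$ i.i.d.\ copies of $\exp(\omega^\top{\bf x})\exp(\omega^\top{\bf x}')$, so by Equation \ref{urk} (equivalently, the strong law of large numbers) the limit equals $\mathbb{E}_\omega[\exp(\omega^\top({\bf x}+{\bf x}'))]$. Recognizing this as the moment generating function of $N({\bf 0},{\bf I})$ evaluated at ${\bf x}+{\bf x}'$ gives $k({\bf x},{\bf x}') = \exp\!\big(({\bf x}+{\bf x}')^\top({\bf x}+{\bf x}')/2\big)$, matching the formula in the main text.

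Next I would expose the features. Expanding the quadratic and factoring yields $k({\bf x},{\bf x}') = \exp({\bf x}^\top{\bf x}/2)\,\exp({\bf x}^\top{\bf x}')\,\exp({\bf x}'^\top{\bf x}'/2)$, and Taylor expanding the middle factor gives $\exp({\bf x}^\top{\bf x}') = \sum_{n\ge 0}({\bf x}^\top{\bf x}')^n/n!$. Each $({\bf x}^\top{\bf x}')^n$ is the degree-$n$ homogeneous polynomial kernel (the $c=0$ polynomial kernel), whose features are the degree-$n$ monomials; the scalar prefactor $\exp({\bf x}^\top{\bf x}/2)$ merely rescales every feature by a function of ${\bf x}$ without changing how many there are. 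This realizes the feature map indexed by all multi-indices $\alpha$, confirming the statement quoted in the main text.

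The main obstacle is to promote this ``infinitely many terms'' observation into a rigorous proof that \emph{no} finite feature map reproduces $k$, and for this I would use a rank argument. If $k({\bf x},{\bf x}')=\phi({\bf x})^\top\phi({\bf x}')$ with $\phi:\mathbb{R}^p\to\mathbb{R}^D$, then every Gram matrix $[k({\bf x}_i,{\bf x}_j)]_{i,j=1}^N$ has rank at most $D$. To contradict this for arbitrary $D$, I would restrict to collinear inputs ${\bf x}_i = t_i{\bf e}_1$ with distinct scalars $t_1,\dots,t_N$ along the first coordinate axis; then the Gram matrix factors as $D\,G\,D$ with $D=\text{diag}(e^{t_1^2/2},\dots,e^{t_N^2/2})$ and $G_{ij}=\exp(t_it_j)=\sum_{n\ge 0}t_i^n t_j^n/n!$. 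Writing $G=\sum_{n\ge 0}\tfrac1{n!}v_nv_n^\top$ with $v_n=(t_1^n,\dots,t_N^n)^\top$, the partial sum over $n=0,\dots,N-1$ equals $V\,\text{diag}(1/n!)\,V^\top$ for the nonsingular Vandermonde matrix $V=[v_0,\dots,v_{N-1}]$, hence is positive definite; since the remaining terms are positive semidefinite, $G$ is positive definite, and because $D$ has positive diagonal the Gram matrix $DGD$ has full rank $N$. Taking $N=D+1$ contradicts the rank bound, so no finite $D$ works and $k$ has infinitely many features. The only points needing care are the interchange of limit and expectation in the first step and the strict positive definiteness of the scalar exponential kernel $\exp(st)$, both of which are standard.
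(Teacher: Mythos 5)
Your proposal is correct, and its first half coincides exactly with the paper's own proof: the paper also factors $k({\bf x},{\bf x}') = \exp(({\bf x}^\top{\bf x}+{\bf x}'^\top{\bf x}')/2)\exp({\bf x}^\top{\bf x}')$, Maclaurin-expands the middle factor, and reads off a feature map whose components are the degree-$l$ monomial features scaled by $\exp({\bf x}^\top{\bf x}/2)/\sqrt{l!}$ (the closed form $\exp(({\bf x}+{\bf x}')^\top({\bf x}+{\bf x}')/2)$ is taken as given from the main text, where it is obtained from the Gaussian moment generating function exactly as in your first step). Where you genuinely depart from the paper is the final rank argument: the paper stops after exhibiting the infinite expansion, which strictly speaking only shows that \emph{some} feature map with infinitely many components exists, not that every feature map must be infinite --- an expansion with infinitely many terms does not by itself preclude a finite one (e.g., $\sum_{n\ge 0} 2^{-n}xx' = 2xx'$ writes a rank-one kernel as an infinite sum). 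Your Vandermonde argument closes this gap: restricting to collinear inputs $t_i{\bf e}_1$ with distinct $t_i$, writing the Gram matrix as $DGD$ with $G_{ij}=\exp(t_it_j)$, and bounding $G \succeq V\,\mathrm{diag}(1/0!,\dots,1/(N-1)!)\,V^\top \succ 0$ via the nonsingular Vandermonde matrix $V$ shows that Gram matrices of arbitrarily large rank occur, so no finite-dimensional feature map can reproduce $k$; this is a strictly stronger conclusion than what the paper establishes, and it is the interpretation of ``has infinite features'' that actually needs proof. In exchange, the paper's shorter version makes the identity of the features explicit (scaled $c=0$ polynomial-kernel features times $\exp({\bf x}^\top{\bf x}/2)$), which is what the main text advertises. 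One cosmetic point: you use the symbol $D$ both for the diagonal matrix $\mathrm{diag}(e^{t_i^2/2})$ and for the feature dimension in the rank bound; rename one of them to keep the contradiction $N=D+1$ unambiguous.
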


\begin{proof}
    \begin{align*}
        k({\bf x},{\bf x}') &= \exp(({\bf x}+{\bf x}')^\top({\bf x}+{\bf x}')/2)\\
        &= \exp(({\bf x}^\top{\bf x} + {\bf x}'^\top{\bf x}')/2)\exp({\bf x}^\top{\bf x}')\\
        &= \exp(({\bf x}^\top{\bf x} + {\bf x}'^\top{\bf x}')/2)\sum_{l=0}^{\infty}\frac{({\bf x}^\top{\bf x}')^l}{l!}\text{, by the Maclaurin series of }\exp(x)\\
        &= \sum_{l=0}^{\infty}\left(\frac{\exp({\bf x}^\top{\bf x}/2)}{\sqrt{l!}}({\bf x}^\top{\bf x}')^l\frac{\exp({\bf x}'^\top{\bf x}'/2)}{\sqrt{l!}}\right)\\
        &= \phi({\bf x})^\top\phi({\bf x}')
    \end{align*}
\end{proof}
Additionally, we show URK can recover the popular polynomial kernel, a non-stationary kernel beyond RFF's capability.
\begin{theorem}
Let \(c_{poly}, n_{poly} \in \mathbb{R}\). Define \({\bf p}_{poly} = [\frac{1}{2},\ \frac{1}{2p},\ \frac{1}{2p},\ \cdots,\ \frac{1}{2p}]^\top \in \mathbb{R}^{p+1},\ \omega \sim Multi(n_{poly}, {\bf p}_{poly})\), the multinomial distribution, \(\bar{\bf x} = [\sqrt{2c_{poly}},\ \sqrt{2p}{\bf x}^\top]^\top \in \mathbb{R}^{p+1},\ g(\omega, {\bf x}) = \exp({\omega^\top\log\bar{\bf x}})\), then \(k({\bf x},{\bf x}') = \lim_{m\xrightarrow{}\infty} \text{URK}_{\omega, g}({\bf x}, {\bf x}') = ({\bf x}^\top{\bf x}'+c_{poly})^{n_{poly}}\)
\end{theorem}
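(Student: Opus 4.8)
The plan is to reduce the limit of URK to an expectation via Equation~\ref{urk}, and then evaluate that expectation as the probability generating function of a multinomial distribution. Since $\omega,\log\bar{\bf x}\in\mathbb{R}^{p+1}$, the map $g(\omega,{\bf x})=\exp(\omega^\top\log\bar{\bf x})$ is scalar-valued, so by Equation~\ref{urk} the limiting kernel is $k({\bf x},{\bf x}')=\mathbb{E}_\omega[g(\omega,{\bf x})\,g(\omega,{\bf x}')]$ (URK being an unbiased estimator of this expectation, exactly as argued in the proof that URK recovers stationary kernels).

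First I would combine the two exponentials: $g(\omega,{\bf x})\,g(\omega,{\bf x}')=\exp(\omega^\top(\log\bar{\bf x}+\log\bar{\bf x}'))$. Using $\log\bar{\bf x}+\log\bar{\bf x}'=\log(\bar{\bf x}\odot\bar{\bf x}')$ (elementwise), this rewrites as $\prod_{i=0}^{p}(\bar{x}_i\bar{x}'_i)^{\omega_i}$, where the indices run over the $p+1$ components of $\bar{\bf x}$.

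The key step is to recognize $\mathbb{E}_\omega[\prod_i t_i^{\omega_i}]$ with $t_i=\bar{x}_i\bar{x}'_i$ as the probability generating function of $Multi(n_{poly},{\bf p}_{poly})$, which by the multinomial theorem equals $(\sum_i p_i t_i)^{n_{poly}}$. It then remains to verify the bookkeeping: the zeroth coordinate contributes $p_0\bar{x}_0\bar{x}'_0=\tfrac{1}{2}\cdot 2c_{poly}=c_{poly}$, and each coordinate $i=1,\dots,p$ contributes $p_i\bar{x}_i\bar{x}'_i=\tfrac{1}{2p}\cdot 2p\,x_i x'_i=x_i x'_i$, so the inner sum collapses to $c_{poly}+{\bf x}^\top{\bf x}'$, giving $k({\bf x},{\bf x}')=({\bf x}^\top{\bf x}'+c_{poly})^{n_{poly}}$.

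The conceptual content lies entirely in the design of ${\bf p}_{poly}$ and $\bar{\bf x}$, which are reverse-engineered so that the multinomial generating function collapses to the polynomial kernel; the main thing to be careful about is confirming that the $\tfrac{1}{2}$ and $\tfrac{1}{2p}$ weights cancel the $\sqrt{2c_{poly}}$ and $\sqrt{2p}$ scalings exactly. One subtlety worth flagging is that the multinomial distribution, and hence the clean generating-function identity, strictly requires $n_{poly}$ to be a non-negative integer; for the stated $n_{poly}\in\mathbb{R}$ the identity must be read with $\omega\sim Multi(n_{poly},{\bf p}_{poly})$ interpreted accordingly (or restricted to integer degree, which is the case of interest for polynomial kernels).
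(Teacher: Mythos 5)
Your proof is correct and follows essentially the same route as the paper: the paper evaluates the multinomial \emph{moment} generating function at $\log\bar{\bf x}+\log\bar{\bf x}'$, i.e.\ $M_\omega(\log\bar{\bf x}+\log\bar{\bf x}') = \left(\sum_i {\bf p}_{poly,i}\,\bar{\bf x}_i\bar{\bf x}'_i\right)^{n_{poly}}$, which is exactly your \emph{probability} generating function identity under the substitution $t_i=\bar{\bf x}_i\bar{\bf x}'_i$, with identical bookkeeping for the $\frac{1}{2}$ and $\frac{1}{2p}$ weights. Your caveat that $Multi(n_{poly},{\bf p}_{poly})$ requires $n_{poly}$ to be a non-negative integer (despite the statement's $n_{poly}\in\mathbb{R}$) is a valid observation that the paper's own proof does not address.
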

\begin{proof}
In the following proof, any subscript \(i\) means the \(i_{th}\) entry of the vector. By definition,
    \begin{align*}
        k({\bf x},{\bf x}') &= \mathbb{E}_{\omega}[{\bf z}_{\omega}({\bf x})^\top{\bf z}_{\omega}({\bf x}')]\\
        &= \mathbb{E}_{\omega}[\exp\left(\omega^\top(\log\bar{\bf x} + \log\bar{\bf x}')\right)]\\
        &= M_{\omega}(\log\bar{\bf x} + \log\bar{\bf x}')\text{, the moment generating function of }\omega\\
        &= \left(\sum_{i=1}^{p+1}{\bf p}_{poly, i}\exp(\log\bar{\bf x}_i + \log\bar{\bf x}'_i)\right)^{n_{poly}}\\
        &= \left(\frac{1}{2}\exp(2\log\sqrt{2c_{poly}}) + \sum_{i=1}^{p}\frac{1}{2p}\exp(\log\sqrt{2p}{\bf x}_i + \sqrt{2p}\log{\bf x}'_i)\right)^{n_{poly}}\\
        &= \left(c_{poly} + \sum_{i=1}^{p}{\bf x}_i{\bf x}'_i\right)^{n_{poly}}\\
        &= ({\bf x}^\top{\bf x}'+c_{poly})^{n_{poly}}
    \end{align*}
\end{proof}

\subsection{Greater expressiveness with URK}

%URK can also easily recover any standard deep kernel learning methods by following their architecture. For example, given the most common architecture that a DNN as a feature extractor before data is sent through a Gaussian kernel, the corresponding URK is

We expand on new architectures of deep random kernels enabled by the definition of URK in this section to show that URK is more flexible than common heuristics in deep kernel learning. Minimal arguments and evidence are provided below since the ultimate goal of this paper is still to propose a Bayesian FL algorithm that can learn a global GP, not a random feature algorithm that provides better GP approximation.

As illustrated in Figure \ref{fig:app}, URK can recover any standard deep kernel combined with conventional stationary GPs easily. Further more, we can exploit the flexibility of URK and define a distribution shifter $h$ that transforms $\omega$. We can start from a standard normal distribution, which is very easy to sample from, and send the samples through $h$ to simulate a much more complex distribution with minimal computation resources required. If we choose $h$ carefully so that the identity function is in its hypothesis set, we will presumably learn a kernel at least as good as the Gaussian kernel.

If we take a step further beyond DNNs, the function $g$ in URK can be assigned some replication policy that creates randomized versions of ${\bf x}$ given different $\omega$ such as multiplying or adding random Gaussian noise to the input. Combined with the idea of being distance-aware in some latent space, we present another architecture as the leftmost diagram in Figure \ref{fig:ker}. We train a GP with URK of these two architectures on a step function and show their predictions in Figure \ref{fig:ker}, where $f$ is a very small DNN. The green curve shows the predictions. The light red area is a 95\% confidence interval based on variance. The blue points are the training data. The replicate policy (the upper right one), although introduces no additional parameters, further increases the model capacity, and its prediction is more reasonable than just using a DNN and a stationary kernel.

\begin{figure}
  \centering
  \includegraphics[width=0.5\textwidth]{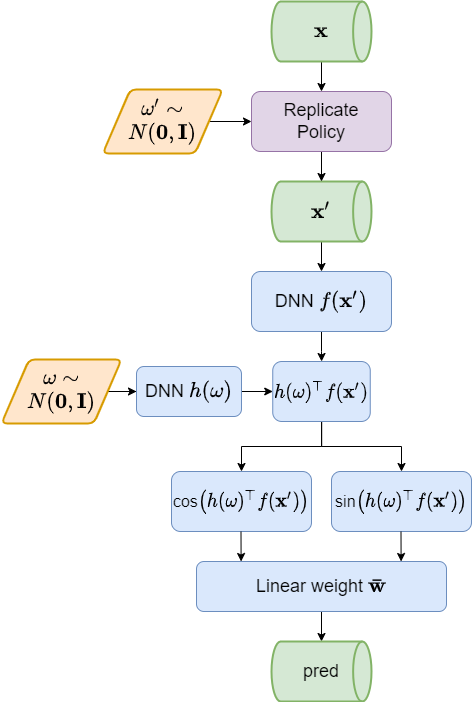}\includegraphics[width=0.5\textwidth]{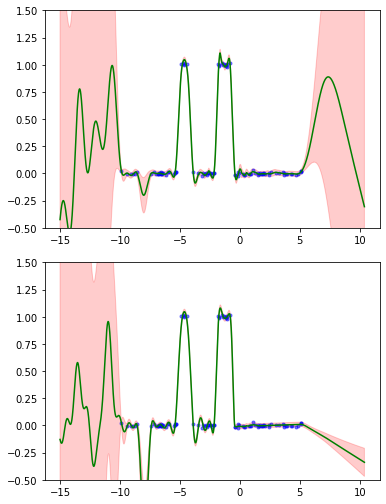}
  \caption{Left: another URK architecture. Right: Train a GP with URK on a step function. Top right: results of the rightmost architecture in Figure \ref{fig:app}. Bottom right: results of the leftmost architecture in Figure \ref{fig:ker}. Blue points: training data; green curve: point estimation; red range: 95\% confidence interval.}
  \label{fig:ker}
\end{figure}

\section{Experiment details}

\begin{figure}
  \centering
  \includegraphics[width=0.9\textwidth]{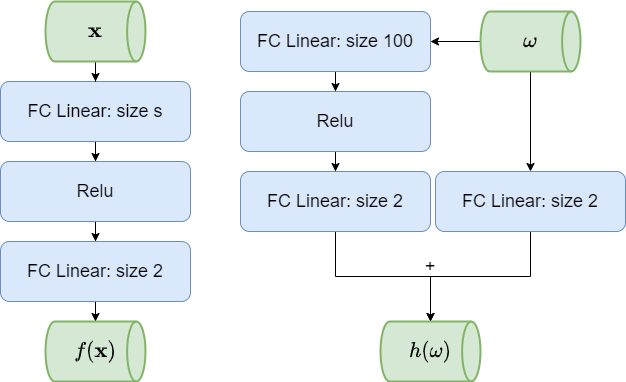}
  \caption{Architecture of DNNs used in UCI experiments. Left: the feature extractor for $\bf{x}$; right: the distribution shifter for $\omega$. FC is short for "fully connected". The layer size $s$ varies for different datasets.}
  \label{fig:exp}
\end{figure}

Figure \ref{fig:exp} shows the architecture of DNNs used in the UCI experiment. For FedAvg, FedProx, and pFedGP, we used the left DNN as their feature extractor. For our methods, we used the rightmost architecture in Figure \ref{fig:app} with the same feature extractor $f$, a very small distribution shifter $h$ (the right DNN in Figure \ref{fig:exp}), and $\omega \sim N({\bf 0}, {\bf I}_{5})$.

We run each random seed of each dataset on 1 CPU and 1 NVIDIA T4 GPU with 16GB RAM.  Some important hyperparameters are listed in Table \ref{Tab:Hyp1}. These hyperparameters are selected through grid searching, as suggested by FedProx.

\renewcommand{\thetable}{A.\arabic{table}}
\setcounter{table}{0}

\begin{table}
  \centering
  \caption{Hyperparameters used in the UCI experiment. For FedLoc $L = 10\rho$ to run the Proximal ADMM algorithm.}
  \begin{tabular}{lllllllllll}
    \toprule
     & \multicolumn{2}{c}{Skillcraft} & \multicolumn{2}{c}{SML} & \multicolumn{2}{c}{Parkinsons} & \multicolumn{2}{c}{Bike} & \multicolumn{2}{c}{CCPP}\\
    \cmidrule(r){2-3}
    \cmidrule(r){4-5}
    \cmidrule(r){6-7}
    \cmidrule(r){8-9}
    \cmidrule(r){10-11}
    \#clients     & 10 & 100 & 10 & 100 & 10 & 100 & 10 & 100 & 10 & 100 \\
    \midrule
    size $s$ & \multicolumn{2}{c}{200} & \multicolumn{2}{c}{2000} & \multicolumn{2}{c}{2000} & \multicolumn{2}{c}{5000} & \multicolumn{2}{c}{5000}\\
    \midrule
    FedProx $\mu$ & 0.5 & 1.0 & 0.1 & 1.0 & 1.0 & 1.0 & 0.1 & 0.01 & 0.001 & 1.0\\
    FedLoc $\rho$ & 5e3 & 5e4 & 5e3 & 5e4 & 1e3 & 5e3 & 5e4 & 5e4 & 1e5 & 1e5\\
    FedBNR-KD $\alpha$ & 10 & 2 & 1 & 0.5 & 5 & 2 & 5 & 0.5 & 5 & 5\\
    \bottomrule
  \end{tabular}
\label{Tab:Hyp1}
\end{table}

\section{Other metrics}

We include the maximum calibration error (MCE) and the Brier score (BRI) of the UCI experiments in Table \ref{Tab:ce_a1} and \ref{Tab:ce_a2}. MCE measures the (estimated) worst difference between $p\%$ confidence intervals and $p'\%$ test points falling into these intervals. BRI measures the mean squared difference between the confidence and observations, where a test point falling in the CI counts as 1, otherwise 0. We also include the standard error of the mean (SEM) in Table \ref{Tab:SEM}. Our methods still perform better in most of the cases.

\begin{table}
{\small
  \caption{Maximum calibration error (MCE) reported. $\Uparrow$ and $\uparrow$ denote significantly worse results with $p < 0.01$ and $p < 0.05$ respectively; $\Downarrow$ and $\downarrow$ denote significantly better results similarly.}
  \centering
  \begin{tabular}{lllllllllll}
    \toprule
    & \multicolumn{2}{c}{Skillcraft} & \multicolumn{2}{c}{SML} & \multicolumn{2}{c}{Parkinsons} & \multicolumn{2}{c}{Bike} & \multicolumn{2}{c}{CCPP}\\
    \cmidrule(r){2-3}
    \cmidrule(r){4-5}
    \cmidrule(r){6-7}
    \cmidrule(r){8-9}
    \cmidrule(r){10-11}
    \#clients     & 10 & 100 & 10 & 100 & 10 & 100 & 10 & 100 & 10 & 100 \\
    \midrule
    Central GP & \multicolumn{2}{c}{0.03} & \multicolumn{2}{c}{0.16} & \multicolumn{2}{c}{0.42} & \multicolumn{2}{c}{0.16} & \multicolumn{2}{c}{0.40}  \\
    \midrule
    
    pFedGP & 0.77$^\Uparrow$ & 0.65$^\Uparrow$ & 0.82$^\Uparrow$ & 0.83$^\Uparrow$ & 0.93$^\Uparrow$ & 0.91$^\Uparrow$ & 0.78$^\Uparrow$ & 0.73$^\Uparrow$ & 0.94$^\Uparrow$ & 0.93$^\Uparrow$\\
    FedLoc & 0.51$^\Uparrow$ & 0.81$^\Uparrow$ & \bf{0.20}$^\Downarrow$ & 0.44$^\Uparrow$ & 0.54$^\Uparrow$ & 0.76$^\Uparrow$ & 0.42$^\Uparrow$ & 0.28$^\Uparrow$ & 0.39$^\Downarrow$ & 0.95\\
    \midrule
    \textbf{ours}\\
    FedBNR & \bf{0.09} & 0.32$^\Uparrow$ & 0.67$^\Uparrow$ & 0.62$^\Uparrow$ & 0.62$^\Uparrow$ & 0.72$^\Uparrow$ & \bf{0.15} & 0.64$^\Uparrow$ & \bf{0.39}$^\Downarrow$ & \bf{0.31}$^\Downarrow$\\
    FedBNR-KD & 0.10 & \bf{0.11} & 0.33 & \bf{0.35} & \bf{0.47} & \bf{0.48} & \bf{0.15} & \bf{0.17} & 0.49 & 0.50\\
    
    \bottomrule
  \end{tabular}
\label{Tab:ce_a1}
}
\end{table}

\begin{table}
{\small
  \caption{Brier score (BRI) reported. $\Uparrow$ and $\uparrow$ denote significantly worse results with $p < 0.01$ and $p < 0.05$ respectively; $\Downarrow$ and $\downarrow$ denote significantly better results similarly.}
  \centering
  \begin{tabular}{lllllllllll}
    \toprule
    & \multicolumn{2}{c}{Skillcraft} & \multicolumn{2}{c}{SML} & \multicolumn{2}{c}{Parkinsons} & \multicolumn{2}{c}{Bike} & \multicolumn{2}{c}{CCPP}\\
    \cmidrule(r){2-3}
    \cmidrule(r){4-5}
    \cmidrule(r){6-7}
    \cmidrule(r){8-9}
    \cmidrule(r){10-11}
    \#clients     & 10 & 100 & 10 & 100 & 10 & 100 & 10 & 100 & 10 & 100 \\
    \midrule
    Central GP & \multicolumn{2}{c}{0.16} & \multicolumn{2}{c}{0.20} & \multicolumn{2}{c}{0.24} & \multicolumn{2}{c}{0.18} & \multicolumn{2}{c}{0.24}  \\
    \midrule
    
    pFedGP & 0.30$^\Uparrow$ & 0.28$^\Uparrow$ & 0.31$^\Uparrow$ & 0.31$^\Uparrow$ & 0.33$^\Uparrow$ & 0.33$^\Uparrow$ & 0.30$^\Uparrow$ & 0.30$^\Uparrow$ & 0.33$^\Uparrow$  & 0.33$^\Uparrow$ \\
    FedLoc & 0.21 & 0.31$^\Uparrow$ & \bf{0.18}$^\Downarrow$ & 0.25$^\Uparrow$ & 0.26$^\Uparrow$ & 0.30$^\Uparrow$ & \bf{0.19}$^\downarrow$ & \bf{0.16}$^\Downarrow$ & 0.24$^\Downarrow$ & 0.33$^\Uparrow$ \\
    \midrule
    \textbf{ours}\\
    FedBNR & \bf{0.18} & 0.22$^\Uparrow$ & 0.29$^\Uparrow$ & 0.28$^\Uparrow$ & 0.28$^\Uparrow$ & 0.30$^\Uparrow$ & 0.20$^\Uparrow$ & 0.29$^\Uparrow$ & \bf{0.24}$^\Downarrow$ & \bf{0.22}$^\Downarrow$\\
    FedBNR-KD & \bf{0.18} & \bf{0.18} & 0.23 & \bf{0.23} & \bf{0.25} & \bf{0.26} & 0.19 & 0.20 & 0.25 & 0.26\\
    \bottomrule
  \end{tabular}
\label{Tab:ce_a2}
}
\end{table}

\begin{table}
{\small
  \caption{UCI regression datasets, RMSE $\pm$ standard error of the mean (SEM) reported.}
  \centering
  \Rotatebox{90}{%
  \begin{tabular}{lllllllllll}
    \toprule
    & \multicolumn{2}{c}{Skillcraft} & \multicolumn{2}{c}{SML} & \multicolumn{2}{c}{Parkinsons} & \multicolumn{2}{c}{Bike} & \multicolumn{2}{c}{CCPP}\\
    \cmidrule(r){2-3}
    \cmidrule(r){4-5}
    \cmidrule(r){6-7}
    \cmidrule(r){8-9}
    \cmidrule(r){10-11}
    \#clients     & 10 & 100 & 10 & 100 & 10 & 100 & 10 & 100 & 10 & 100 \\
    \midrule
    %Central GP & \multicolumn{2}{c}{0.077} & \multicolumn{2}{c}{0.025} & \multicolumn{2}{c}{0.087} & \multicolumn{2}{c}{0.013} & \multicolumn{2}{c}{3.458}  \\
    %\midrule
    
    local+local & 1.26$\pm$0.03 & 1.48$\pm$0.01 & 1.49$\pm$0.06 & 2.32$\pm$0.03 & 10.9$\pm$0.26 & 10.6$\pm$0.05 & 0.79$\pm$0.02 & 0.91$\pm$0.01 & 14.6$\pm$0.31 & 19.3$\pm$2.08\\
    local+global & 1.08$\pm$0.02 & 1.22$\pm$0.01 & 1.00$\pm$0.02 & 1.65$\pm$0.01 & 6.42$\pm$0.10 & 7.42$\pm$0.01 & 0.59$\pm$0.01 & 0.73$\pm$0.01 & 5.62$\pm$0.19 & 7.03$\pm$0.16\\
    avg+local & 1.05$\pm$0.02 & 1.06$\pm$0.02 & 0.61$\pm$0.06 & 0.81$\pm$0.05 & 9.84$\pm$0.31 & 8.80$\pm$0.11 & 0.45$\pm$0.01 & 0.54$\pm$0.01 & 8.32$\pm$0.43 & 13.4$\pm$0.43\\
    kd+local & 1.04$\pm$0.01 & 1.28$\pm$0.02 & 0.86$\pm$0.06 & 1.34$\pm$0.12 & 6.15$\pm$0.30 & 6.97$\pm$0.27 & 0.51$\pm$0.01 & 0.61$\pm$0.01 & 10.0$\pm$0.61 & 17.1$\pm$0.63\\
    \midrule
    FedAvg & 1.00$\pm$0.01 & 1.03$\pm$0.01 & 0.36$\pm$0.02 & 0.71$\pm$0.02 & 6.83$\pm$0.16 & 7.38$\pm$0.21 & \bf{0.38}$\pm$0.01 & 0.42$\pm$0.01 & 4.45$\pm$0.06 & 4.44$\pm$0.04\\
    FedProx & 0.98$\pm$0.01 & 1.05$\pm$0.01 & 0.34$\pm$0.01 & 0.62$\pm$0.02 & 6.32$\pm$0.27 & 7.38$\pm$0.25 & 0.39$\pm$0.01 & 0.42$\pm$0.01 & 4.43$\pm$0.04 & 4.47$\pm$0.02\\
    pFedGP & 0.99$\pm$0.01 & 1.15$\pm$0.01 & 0.75$\pm$0.05 & 1.34$\pm$0.04 & 9.36$\pm$0.16 & 8.91$\pm$0.09 & 0.45$\pm$0.02 & 0.46$\pm$0.01 & 14.2$\pm$0.61 & 18.2$\pm$0.23\\
    FedLoc & 1.08$\pm$0.01 & 4.15$\pm$0.26 & 2.83$\pm$0.07 & 5.43$\pm$0.04 & 8.40$\pm$0.02 & 11.5$\pm$0.03 & 0.64$\pm$0.01 & 0.75$\pm$0.01 & 20.6$\pm$0.74 & 44.1$\pm$0.61\\
    \midrule
    \textbf{ours}\\
    FedBNR & 0.98$\pm$0.01 & \bf{0.97}$\pm$0.01 & \bf{0.25}$\pm$0.01 & \bf{0.44}$\pm$0.02 & \bf{3.10}$\pm$0.23 & 5.42$\pm$0.20 & 0.39$\pm$0.01 & \bf{0.42}$\pm$0.01 & 4.40$\pm$0.01 & 4.51$\pm$0.03\\
    FedBNR-KD & \bf{0.96}$\pm$0.01 & 0.98$\pm$0.01 & 0.55$\pm$0.01 & 0.55$\pm$0.01 & 4.58$\pm$0.05 & \bf{4.67}$\pm$0.04 & 0.43$\pm$0.01 & 0.48$\pm$0.01 & \bf{4.38}$\pm$0.01 & \bf{4.38}$\pm$0.01\\
    
    \bottomrule
  \end{tabular}
  }% end rotatebox
\label{Tab:SEM}
}
\end{table}

%Optionally include extra information (complete proofs, additional experiments and plots) in the appendix.
%This section will often be part of the supplemental material.

\end{document}